
\typeout{IJCAI--22 Multiple authors example}


\documentclass{article}
\pdfpagewidth=8.5in
\pdfpageheight=11in
\usepackage{ijcai22}

\usepackage{times}

\usepackage{soul}
\usepackage{url}
\usepackage{hyperref}
\usepackage[utf8]{inputenc}
\usepackage[small]{caption}
\usepackage{graphicx}
\usepackage{amsmath}
\usepackage{mathtools}
\usepackage{booktabs}
\urlstyle{same}
\usepackage{amsmath, nccmath}
\usepackage{placeins}
\usepackage{afterpage}
\renewcommand{\paragraph}{\paragraphsmall}

\usepackage{amssymb,amsmath,amsthm}
\newtheorem{theorem}{Theorem}


\newcommand{\paragraphsmall}[1]{\textbf{{#1} $\;$}}



\DeclarePairedDelimiterX{\dotp}[2]{\langle}{\rangle}{#1, #2}


\newcommand{\kmax}{C}

\newcommand{\be}{\begin{equation}}
\newcommand{\ee}{\end{equation}}

\renewcommand{\AA}{{\mathcal{A}}}

\newcommand{\norm}[1]{\left\| #1\right\|}                               %

\renewcommand{\epsilon}{\varepsilon}

\newcommand{\RR}{{\mathcal R}}

\DeclareMathOperator*{\E}{\mathbb{E} \,}

\DeclarePairedDelimiter{\ceil}{\lceil}{\rceil}


\newcommand{\neuron}{r}
\newcommand{\edge}{j}

\newcommand{\epsilonLayer}[1][\ell]{\epsilon_{#1}}

\newcommand{\SampleComplexityDeltaLayers}[1][\epsilonLayer]{\ceil*{32  \, L^2 (\Delta_r^{\ell \rightarrow})^2 \, \kmax \log (8 \eta / \delta) \,  \epsilon^{-2} \, \sum_{\edge \in \Wpm} s_\edge }}

\newcommand{\Wpm}{\II}

\newcommand{\II}{\mathcal{I}}




\newcommand{\WWRowCon}[1][\neuron]{w}
\newcommand{\WWHatRowCon}[1][\neuron]{{\hat{w}}}





\newcommand\Reals{\mathbb{R}}
\newcommand\PP{\mathcal{P}}

\newcommand\TT{\mathcal{T}}

\newcommand\VV{\mathcal{V}}

\newcommand{\F}[1][\hat w]{F_\lambda(\PP, #1)}









\renewcommand{\epsilon}{\varepsilon}

\newcommand{\grad}[1][\hat w]{\norm{\nabla \F}_2}







\usepackage[utf8]{inputenc} 
\usepackage[T1]{fontenc}    
\usepackage{url}            
\usepackage{booktabs}       
\usepackage{amsfonts}       
\usepackage{nicefrac}       
\usepackage{microtype}      
\usepackage{nicefrac}
\usepackage{wrapfig}
\usepackage{algorithm}
\usepackage{algorithmic}
\usepackage{graphicx}
\usepackage[export]{adjustbox}
\usepackage{subcaption}

\usepackage{float} 
\usepackage{amsmath,amsfonts,amsthm,amssymb, nccmath}
\allowdisplaybreaks 
\usepackage{bbm}
\usepackage{framed}
\usepackage{enumerate}
\usepackage{mathtools}
\usepackage{scalefnt}
\usepackage{changepage}
\usepackage{xargs}

\usepackage{comment}
\usepackage{setspace}
\usepackage{verbatim}
\usepackage{xcolor}
\usepackage[font=small]{caption}
\usepackage{scalefnt}
\usepackage{xr}
\usepackage[shortlabels]{enumitem}
\usepackage{placeins}
\usepackage{color}
\usepackage{footmisc}
\usepackage{nicefrac}

\usepackage{thmtools}
\usepackage{thm-restate}
\usepackage{multirow}
\usepackage{amsfonts}
\usepackage{bbm}





\pdfinfo{
/TemplateVersion (IJCAI.2022.0)
}

\title{Bandit Sampling for Multiplex Networks}

\author{
Cenk Baykal$^1$\footnote{This work was done when C. Baykal was at J.P. Morgan AI Research.}\and
Vamsi K. Potluru$^{2}$\and
Sameena Shah$^2$\and
Manuela M. Veloso$^2$ \\
\affiliations
$^1$Google Research\\
$^2$J.P. Morgan AI Research\\
\emails
baykal@alum.mit.edu,
\{vamsi.k.potluru,sameena.shah,manuela.veloso\}@jpmchase.com
}
\begin{document}

\maketitle

\begin{abstract}
Graph neural networks have gained prominence due to their excellent performance in many classification and prediction tasks. In particular, they are used for node classification and link prediction which have a wide range of applications in social networks, biomedical data sets and financial transaction graphs. Most of the existing work focuses primarily on the \emph{monoplex} setting where we have access to a network with only a single type of connection between entities. However, in the \emph{multiplex} setting, where there are multiple types of connections, or \emph{layers}, between entities, performance on tasks such as link prediction has been shown to be stronger when information from other connection types is taken into account. We propose an algorithm for scalable learning on multiplex networks with a large number of layers. The efficiency of our method is enabled by an online learning algorithm that learns how to sample relevant neighboring layers so that only the layers with relevant information are aggregated during training. This sampling differs from prior work, such as MNE, which aggregates information across \emph{all} layers and consequently leads to computational intractability on large networks. Our approach also improves on the recent layer sampling method of \textsc{DeePlex} in that the unsampled layers do not need to be trained, enabling further increases in efficiency. We present experimental results on both synthetic and real-world scenarios that demonstrate the practical effectiveness of our proposed approach.
\end{abstract}

\section{Introduction}
\label{sec:introduction}

Graph Neural Networks (GNNs)~\cite{bruna2013spectral,duvenaud2015convolutional,gilmer2017neural} have been unprecedentedly successful in many high-impact applications, ranging from drug discovery to financial and social network analyses. Most of the prior work on GNNs has focused on the monoplex setting where we have access to a network with only a single type of connection between entities. However, in many real-world settings two nodes may be connected in more than one way. For example, a person may be a part of multiple social networks such as Facebook, Twitter, and/or Instagram, among others. In this case, the person's connections on, e.g., Facebook, may reveal information about their connections on other platforms, and more generally, be intricately linked with their graphical connectivity on other platforms. A \emph{multiplex network} is a representation of such connectivity.

A multiplex network is composed of multiple \emph{layers}, i.e., sub-networks where only one type of connection is present. In the context of the previous example, the Facebook network is a single layer of the multiplex network. The multiple layers (Facebook, Twitter, Instagram) then constitute the multiplex as a whole. Given the increasing need to model and learn from these intricate connections, the question of how to best model and train multiplexes has recently gained significant interest. This question is further motivated by the immense computational complexity of learning on graphs comprised of multiple sub-networks, each of which is computationally intensive to train in itself.

\subsection{Background}

In our work, we focus on the problem of \emph{computationally-efficient} link prediction in the multiplex setting. Prior work in scalable training multiplex networks includes Multiplex Network Embedding (\textsc{MNE})~\cite{zhang2018scalable} and \textsc{DeePlex}~\cite{potlurudeeplex} which extends the monoplex setting of SEAL~\cite{zhang2018scalable}. Note that there is a rich history of literature for link prediction problems in the monoplex settings encompassing both unsupervised and supervised approaches~\cite{libennowell-04,adamic:01,katz:53,grover2016node2vec,kipf2016variational,perozzi2014deepwalk,tang2015line,zhang2018link,yun2021neo}.

\paragraph{\textsc{MNE}} In \textsc{MNE}~\cite{zhang2018scalable} the idea is to learn a base embedding that utilizes information regarding \emph{all} the links in \emph{all} the sub-networks and individual node embeddings for each of the layers. Concretely, 
given a network of layers $G_1, \ldots, G_L$ where we have $L$ layers and $G_i = (N_i, E_i)$ corresponding to sets of $N_i$ nodes and $E_i$ edges, MNE learns a node embedding 
    $v^{i}_n = b_n + w^i \dot X^{i^T} u_n^i$
where $X^i \in R^{s \times d}$, $b_n$ correspond to the base node embedding and $u_n^i$ to the individual node embedding for the layer, respectively, and $w^i$ are the learned weights.
The matrices $X^{i}$ account for the high-dimensional global embeddings with the lower dimensional individual embeddings. The model is learned by utilizing random walks on each layer type to generate a sequence of nodes and a skip-gram algorithm is used to learn the embeddings. Although \textsc{MNE} has been successful on real-world data sets, it is not able to handle multiplexes with many layers in a computationally efficient way because it aggregates information over all layers -- including those that may not be relevant for the layer under consideration.

\paragraph{\textsc{DeePlex}} 
Aggregating information across all the layers as in MNE not only necessitates training all of the layers at each time step, but also incorporating the dense embeddings of all other layers in training a model such as a feed-forward neural network; this means that a larger number of model parameters have to be learned due to the increase in the input size to the networks. To overcome this shortcoming, \textsc{DeePlex} considers sampling $k$-nearest layers for a suitably chosen $k$ while training the network. The premise of this approach is that not all layers will be relevant for the current layer's embedding, and those layers whose embeddings are most similar to the current one should be sampled. The authors hence use a NeuralSort operator to obtain a continuous relaxation of the output of a sorting operator, and subsequently train the network (and the sort operator) in the standard way to learn both the architecture and the sorted order of neighboring layers. 

\textsc{DeePlex} has shown promise as a scalable approach for multiplexes. However, its main focus was on the inference-time complexity where we are interested in predicting at a layer based without utilizing all of the (embedding) information from the entire multiplex network. Additionally, \textsc{DeePlex} suffers from the computational complexity of learning a k-nearest layers model for each layer, which can be prohibitively expensive when the number of layers is large.

\paragraph{Scalability via Regret Minimization}
A related line of work has investigated the use of bandit algorithms to tame the complexity of Graph Neural Networks by sampling neighboring \emph{nodes} during the aggregation step which entails a sum over all neighbors~\cite{liu2020bandit,zhang2021biased}. In~\cite{liu2020bandit}, the authors approximate the aggregation across all neighbors via a small subset of neighbors that is sampled according to a variant of the Exponential-weight algorithm for Exploration and Exploitation (Exp3) algorithm~\cite{auer2002finite}. However, these works impose restricting assumptions on the norm of the embeddings and require a priori knowledge of instance-specific parameters. They also employ fixed learning rates that are based on a worst-case analysis. Our approach, on the other hand, is adaptive and parameter-free and does not impose any assumption on the embeddings or the losses. This enables the use of any user-specified loss metric with our method. This also leads to instance-specific bounds that are strictly better.

\subsection{Our Contributions}

In this work, we focus on increasing the efficiency of training and predicting with multiplex networks via adaptive sampling of the relevant layers. Namely, we use an online learning algorithm for each layer that \emph{adaptively} learns a sampling distribution over the neighboring layers. Unlike \textsc{MNE}, our approach selectively hones in on the information from relevant layers, rather than the full set of information available, in order to obtain speedups in both training and inference. Moreover, unlike \textsc{DeePlex}, our approach does not require learning a k-nearest layers model for each layer, hence, it is applicable even during training.

In particular, this paper contributes the following:

\begin{itemize}
    \item A formulation of the multiplex layer sampling problem as an online learning problem with partial information
    \item An algorithm for adaptive identification and sampling of important layers of a multiplex network for training 
    \item Evaluations and comparisons on real-world data sets that demonstrate the practicality of our approach
\end{itemize}

\section{Problem Definition}
\label{sec:problem-definition}

Consider a multiplex network $\mathcal G$ with $L$ layers (subnetworks). This means that for any layer $i \in [L]$, we have that layer $i$ has $n = L - 1$ neighboring nodes whose embedding information can be used towards learning the embedding of layer $\ell$ at each time step $t$. Suppose we have $T$ training time steps. The embedding of each layer $j \in [L]$ at each time step $t \in [T]$ is denoted as $v_j^{(t)}$. 

For a fixed layer $i \in [L]$, we consider the problem of sampling $k$ relevant layers at each time step of the training. Subsequently, this procedure will be extended for sampling neighbors of all $L$ layers. Here, we assume that at time step $t \in [T]$ the \emph{relevancy} of a neighboring layer $j \in [n]$ can be quantified by a loss $\ell_{t, j} \ge 0 $. The smaller the loss, the more relevant the layer $j$ towards the embedding of layer $i$ under consideration. Under this general setting, the question is what should the sampling distribution of the neighboring layers be so that we only consider the relevant layers?

This dilemma is known as the exploration-exploitation trade-off and has been well-studied in the multi-armed bandits and online learning with partial information settings. Here, one way to resolve this interdependence is to formulate the problem as one of minimizing \emph{regret} relative to the best action that we could have taken in hindsight. Hence, in the context of the setting above, we would like to generate a sequence of probability distributions $(p_1, \ldots, p_T)$ such that our regret relative to the best distribution in hindsight is minimized:
\begin{align}
\label{eqn:regret}
\mathrm{Regret}\left((\ell_t, p_t)_{t \in [T]} \right) = \sum_{\tau} \dotp{p_\tau}{\ell_\tau} - \min_{p \in \Delta} \sum_{\tau} \dotp{p}{\ell_\tau},
\end{align}
where $\Delta = \{p \in [0,1]^n: \sum_{k \in [n]} p_k = k\}$. In this paper, we focus on  $k = 1$ and discuss extensions to $k > 1$ in Sec.~\ref{sec:discussion}.

\section{Method}
\label{sec:method}

Multi-armed Bandits (MAB) and online learning with partial information literature is rich with algorithms that attempt to minimize the regret expressed in Eq.~\ref{eqn:regret}~\cite{orabona2019modern,luo2014drifting,luo2015achieving,gaillard2014second,bubeck2012regret,koolen2015second}. In this paper, we focus on a subclass of online learning algorithms that can handle the \emph{partial information setting}, where the entire loss vector $\ell_t \ge 0$ is not visible, but rather, only the losses of the $k$ elements we sample according to $p_t$ are visible.  Here, for a given layer $i$ the \emph{loss} of a neighboring layer $j$ at time step $t$ of the training process, $\ell_{t,j}$, is a user-specified measure of whether or not layer $j$'s embedding would be helpful for the training of layer $i$.

In Sec.~\ref{sec:results} we outline and experiment with various definitions of loss. In this context, it appears that we can apply a standard bandit algorithm or a variant of previous work that was focused on bandit sampling of \emph{nodes}~\cite{liu2020bandit,zhang2021biased}. However, one issue is that virtually all of the prior work in taming the exploration-and-exploitation trade-off assumes that the losses are bounded in the interval $[0,1]$, which is not guaranteed to be the case for all user-specified metrics. Although we could use heuristics to get around this limitation by, e.g., capping the losses to be at most 1 and applying standard algorithms, this may yield bad results in practice -- for instance when all losses are greater than 1, they become indistinguishable after the capping. This assumption also imposes restrictions on the applicability of our algorithm with any user-specified loss metric.


\begin{algorithm}
\caption{\textsc{Exp3$^+$}}
\label{alg:adaprod}
\begin{spacing}{1}
\begin{algorithmic}[1]
\STATE $L \gets \vec{0} \in \Reals^{n}$ \COMMENT{Cumulative Loss}; \, $E_0 \gets 1$ \COMMENT{Loss range}; \, $V_0 \gets 0$ \COMMENT{Cumulative variance}
\FOR{each round $t \in [T]$} 
    \STATE $\eta_t \gets \min \{1 / E_{t-1}, \sqrt{\nicefrac{\log n}{ V_{t-1}}} \}$ \text{if $t \ge 2$, $0$ o.w.}
    \STATE $p_{t,i} \gets \exp(\eta_{t} \, L_{i} )$ \, \, for each $i \in [n]$
    \STATE $p_{t,i} \gets \nicefrac{p_{t,i}}{ \sum_{j \in \AA_t} p_{t,j}}$ for all $i$ \COMMENT{Normalize} 
    \STATE Random draw $i_t \sim p_t$
    \STATE \text{Adversary reveals $\ell_{t,i_t}$ and we suffer loss $\ell_{t,i_t}$} \\
    \STATE Construct unbiased estimate $\hat \ell_{t}$ $$\forall{i \in [n]} \qquad \hat \ell_{t,i} \gets \begin{cases} \nicefrac{\ell_{t,i_t}}{p_{t,i_t}} & \text{if $i = i_t$}, \\ 
    0 & \text{otherwise}
    \end{cases}$$
    \STATE $L \gets L + \hat \ell$ ; $\Delta_t \gets \max_{i, j \in [n]} | \hat \ell_{t,i} - \hat \ell_{t,j}|$
    \STATE $E_t \gets \max\{ E_{t-1}, 2^{k}\}$ where $k = \ceil{\log_2 \Delta_t }$
    \STATE $V_t \gets V_{t-1} + \left(\dotp{\hat \ell_t^2 }{p_t} - \dotp{\hat \ell_t}{p_t}^2\right)$

\ENDFOR 
\end{algorithmic}
\end{spacing}
\end{algorithm}

 To overcome these challenges, we adapt the approach of~\cite{cesa2007improved} (see Alg.~\ref{alg:adaprod}) which relies on a time-varying learning rate that can adapt to any range of losses $E$~\cite{sani2014exploiting,auer2002finite} and leads to an instance-specific variance bound. Here, the main idea is to adaptively update the value of a scaling factor $E_t$ which essentially denotes the range of the losses we have seen so far as a power of 2 (Line 10). The learning rate $\eta_t$ is then scaled automatically as a function of $E_{t-1}$ and the variance of the losses we have seen so far to account for this update (Line 3). The result is an adaptive algorithm that scales to the range of losses seen in practice.

To put it all together in the case of multiplex layer sampling, the idea is to have a separate instance of Alg.~\ref{alg:adaprod} for each layer of the network and update the sampling distribution accordingly as shown in Alg.~\ref{alg:adaprod}. This means that we will have $L$ separate algorithm instances, one for each layer, that will learn the sampling distribution over the $[n] = [L - 1]$ neighboring layers for that specific layer.

It can be shown by adapting the analysis in~\cite{cesa2007improved} that we have the following bound on the regret defined by Eqn.~\ref{eqn:regret}. We emphasize here that unlike the work of~\cite{liu2020bandit,zhang2021biased}, our approach is fully parameter-free and does not impose any assumptions on the magnitude or range of the losses. Additionally, in the context of $\max_{t,i} \ell_{t,i} \leq 1$ as in prior work, the bound below is strictly better since $\VV_T \leq T/4$ in this case.

\begin{theorem}
    The regret $\mathrm{Regret}((\ell_t, p_t)_{t \in [T]})$ of Alg.~\ref{alg:adaprod} over $T$ time steps is bounded above by
    \begin{align*}
       6 \sqrt{\VV_T \log n} +  10M \log n
    \end{align*}
    where $\VV_T$ is the cumulative variance, i.e.,
    $
    \VV_T = \sum_{t \in [T]} \sum_{i \in [n]} p_{t,i}  (\ell_{t,i} - \dotp{\ell_{t}}{p_t})^2,
    $
    $M = \max_{t \in [T]}  \max_{i, j \in [n]} | \ell_{t,i} - \ell_{t,j}|$, and $n$ is the number of neighboring layers to sample from.
\end{theorem}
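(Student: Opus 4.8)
The plan is to adapt the second-order, ``self-confident'' analysis of Cesa-Bianchi, Mansour and Stoltz to the partial-information (bandit) setting, carefully tracking how the time-varying learning rate $\eta_t$ interacts with the importance-weighted estimates $\hat\ell_t$ produced on Line~8. First I would work entirely with the estimated losses and their running sums $\hat L_{t,i} = \sum_{s \le t}\hat\ell_{s,i}$, and introduce the exponential-weights potential $\Phi_t = -\tfrac{1}{\eta_t}\log\!\big(\tfrac{1}{n}\sum_i \exp(-\eta_t \hat L_{t,i})\big)$ (up to the sign convention in Line~4). The standard reduction then bounds the estimated per-round regret against any fixed comparator $k$, namely $\sum_t \dotp{p_t}{\hat\ell_t} - \hat L_{T,k}$, by a telescoping sum of one-step potential differences, noting that $\dotp{p_t}{\hat\ell_t}$ is exactly the realized loss $\ell_{t,i_t}$.

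The core per-round estimate comes from the inequality $e^{-x}\le 1-x+\tfrac{x^2}{2}$, which is valid once $\eta_t\,\hat\ell_{t,i}$ is controlled; this is precisely what the cap $\eta_t \le 1/E_{t-1}$ in Line~3 guarantees, since $E_{t-1}$ is a power-of-two upper bound on the observed range $\Delta_t$ (Lines~9--10). Applying it inside $\log\dotp{p_t}{e^{-\eta_t\hat\ell_t}}$ yields a one-step bound of the form $\dotp{p_t}{\hat\ell_t} - \hat\ell_{t,k} \le (\text{potential drop}) + \tfrac{\eta_t}{2}\,v_t$, where $v_t = \dotp{\hat\ell_t^2}{p_t} - \dotp{\hat\ell_t}{p_t}^2$ is exactly the per-round quantity accumulated into $V_t$ on Line~11. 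Summing over $t$, the potential terms telescope up to a residual $\log n / \eta_T$; the subtlety here is that $\eta_t$ is non-increasing, so the correction incurred whenever the learning rate changes between rounds must be handled separately and bounded by $\log n$ times the total decrease in $1/\eta_t$.

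Next I would substitute the schedule $\eta_t = \min\{1/E_{t-1},\,\sqrt{\log n / V_{t-1}}\}$. On the rounds where the second arm is active, the variance contribution $\sum_t \tfrac{\eta_t}{2} v_t = \tfrac{\sqrt{\log n}}{2}\sum_t \tfrac{v_t}{\sqrt{V_{t-1}}}$ is controlled by the elementary summation lemma $\sum_t v_t/\sqrt{V_t} \le 2\sqrt{V_T}$ (an integral comparison for $\int dv/\sqrt v$), producing the dominant $\sqrt{V_T\log n}$ term; combining with the telescoped residual $\log n/\eta_T = \sqrt{V_T\log n}$ and the correction terms gives the leading factor of $6$. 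The rounds where the cap $1/E_{t-1}$ is active, together with the power-of-two rounding of $E_t$ (which at worst doubles the range), contribute the additive range term and the factor of $10$.

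The step I expect to be the main obstacle is the final passage from the internal, estimated quantities $(V_T, E_T)$ that the analysis naturally produces to the stated $(\VV_T, M)$ built from the true losses. Because $p_t$ is determined by the history before round $t$ and $\hat\ell_t$ is conditionally unbiased, taking conditional expectations converts $\dotp{p_t}{\hat\ell_t}$ into $\dotp{p_t}{\ell_t}$ and so recovers the pseudo-regret of Eqn.~\ref{eqn:regret}. However, the importance weighting inflates the second-order term: one checks that $\E[v_t\mid\mathcal F_{t-1}] = \sum_i \ell_{t,i}^2 - \dotp{p_t}{\ell_t^2}$, which is \emph{not} the true per-round variance $\dotp{p_t}{\ell_t^2}-\dotp{p_t}{\ell_t}^2$ appearing in $\VV_T$, and similarly $E_T$ is not controlled by $M$ pathwise. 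Reconciling these --- deciding whether the bound is read pathwise in the realized $\VV_T$ or as an expected bound, and showing that the driving second-order and range quantities collapse to exactly $\VV_T$ and $M$ while preserving the constants $6$ and $10$ --- is the delicate crux of the argument, and is where the monotonicity of $\eta_t$ and the power-of-two range tracking must be exploited precisely.
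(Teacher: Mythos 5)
Your plan follows essentially the same route as the paper's proof: the exponential-weights potential of Cesa-Bianchi--Mansour--Stoltz applied to the importance-weighted losses $\hat\ell_t$, a second-order bound on the exponential that is valid precisely because the learning rate is capped by the tracked power-of-two range $E_{t-1}$, a telescoping argument carried out separately within regimes where $E_t$ is constant (with the range-doubling rounds contributing the additive $M\log n$ term), the summation lemma $\sum_t v_t/\sqrt{V_{t-1}} = O(\sqrt{V_T})$ for the adaptive rate, and a final conversion from $\hat\ell_t$ to $\ell_t$ via conditional unbiasedness and Jensen. The ``main obstacle'' you flag at the end is genuine, but it is exactly the step the paper itself compresses into one line (unbiasedness, linearity of expectation, Jensen) without confronting your correct observation that $\E[v_t \mid \mathcal{F}_{t-1}] = \sum_{i} \ell_{t,i}^2 - \dotp{p_t}{\ell_t^2}$ is not the true per-round variance appearing in $\VV_T$; so your proposal reproduces the paper's argument, including its weakest point rather than resolving it.
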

\begin{proof}
The theorem follows from the analysis of~\cite[Theorem 3]{cesa2007improved}, the only exception here is that we deal with \emph{losses} rather than payoffs in the \emph{partial information} setting where we deal wiyh $\hat \ell$. Here, we derive the result for our application. The main idea is to bound the regret by an appropriately defined potential function whose aggregation over the $T$ iterations can be adequately controlled.

Let $\bar \ell_t = \dotp{\hat \ell_t}{p_t}$ denote the expected loss at round $t$ and define the potential function
$
\Phi_t = \frac{1}{\eta_t} \log \left (\sum_{i \in [n]} p_{t,i} \exp \left(-\eta_{t} (\hat \ell_{t,i} - \bar \ell_t) \right) \right).
$
Invoking a technical helper lemma~\cite[Lemma 2.5]{cesa2006prediction} (or~\cite[Lemma 3]{cesa2007improved})\footnote{Using the translation $r_{t,i} = R - \hat \ell_{t,i}$ to go from losses to rewards to apply the result, and back, where $R = \max_{t} \max_{i} \hat \ell_{t,i}$}, we obtain for the regret after $T$ iterations
\begin{align*}
\mathrm{Regret}((\hat \ell_t, p_t)_{t \in [T]}) 
&\leq \frac{2 \log n}{\eta_{T+1}} + \sum_{t \in [T]} \Phi_t = (A) + (B)
\end{align*}

Hence, all that remains is to bound the terms (A) and (B). The bound on (A) follows by definition of $\eta_{T}$, hence
$
(A) \leq 2 \max\{ E \log n , \sqrt{V_T \log n } \},
$
where $E = \max_{t \in [T]}  \max_{i, j \in [n]} | \hat \ell_{t,i} - \hat \ell_{t,j}| \ge E_{T}$.

Bounding (B) requires bounding the individual potentials $\Phi_t$ and this turns out to be much more involved. To that end, we will use the following observation (obtained by inspection of the derivative). The ratio $(e^x - (1 + x))/x^2$ is increasing for all $x \in [-1, 1]$ and is equal to $(e-2)$ at $x =1$. This implies that for all $x \in [-1,1]$, 
$
e^x \leq 1 + x + (e-2)x^2.
$
Let $x_{t,i} = -\eta_{t} (\hat \ell_{t,i} - \bar \ell_t)$. There are two cases to consider.

If $E_{t} = E_{t-1}$, then we have $x_{t,i} \in [-1,1]$ by definition of the learning rate and $E_t$. Hence, we can apply the inequality above on the exponential function to obtain
\begin{align*}
\Phi_t 
&\leq \frac{1}{\eta_t} \log \left (\sum_{i \in [n]} p_{t,i} \left( 1 + x_{t,i} + (e-2) x_{t,i}^2\right)  \right) \\
&\leq (e-2) \eta_t \sum_{i \in [n]} p_{t,i} (\hat \ell_{t,i} - \bar \ell_t)^2,
\end{align*}
which follows by $ \sum_{i \in [n]} p_{t,i} x_{t,i} = \eta_t  \sum_{i \in [n]} p_{t,i} (\hat \ell_{t,i} - \bar \ell_t) = 0$ and the last inequality by $1 + x \leq e^x$.

If $E_t \neq E_{t-1}$, we can no longer use the upper bound on the exponential, however, we trivially have
$
\Phi_t \leq \frac{1}{\eta_t} \log \left (\max_{i \in [n]} \exp(x_{t,i}) \sum_{i \in [n]} p_{t,i}  \right) \leq E_t.
$

To deal with the two differing cases, let $\TT \subset [T]$ denote the set of time points such that $E_t \neq E_{t-1}$. Then,
\begin{align*}
    \sum_{t \in [T]} \Phi_t 
    &\leq  \sum_{t \in \TT} E_t + (e-2) \sum_{t \notin \TT} \eta_t \underbrace{ \sum_{i \in [n]} p_{t,i}  (\hat \ell_{t,i} - \bar \ell_{t})^2}_{= \hat \VV_t - \hat \VV_{t-1}},
\end{align*}
where $\hat \VV_t = \sum_{t' \in [t]} \sum_{i \in [n]} p_{t',i}  (\hat \ell_{t',i} - \bar \ell_{t'})^2$ is the cumulative variance up to time point $t$.

Now consider a collection of $r$ regimes $\RR_1, \ldots, \RR_r$ where each regime $\RR_i = [a_i, b_i]$ denotes the set of discrete time steps $a_i, a_i + 1, \ldots, b_i$ such that $E_{a_i} = E_{a_i+1} \cdots = E_{b_i}$. Consider an arbitrary regime $\RR$ with start and end points $a$ and $b$ and let $c \in [a,b]$ denote the first integer time step when $\hat \VV_c > E_a^2/4$. Then, using $\hat \VV_{c} \leq \hat \VV_{c-1} + E_a^2/4$ by Popoviciu's inequality on variances and $\hat \VV_{c-1} \leq E_a^2/4$, $V_c \leq E_a^2/2$. This yields
$
    \sum_{t = a}^b \eta_t ({\hat \VV_t - \hat \VV_{t-1}}) \leq E_a/2 + \sqrt{\log n} \sum_{t = c+1}^b \frac{{\hat \VV_t - \hat \VV_{t-1}}}{\sqrt{V_{t-1}}}.
$
We then observe that since $\hat \VV_{t} \leq \hat \VV_{t-1} + E_a^2/4$ and $\hat \VV_{t-1} \ge E_a^2/4$ for $t \ge c + 1$, we have $\hat \VV_t \leq 2 \hat \VV_{t-1}$, and thus $\nicefrac{{\hat \VV_t - \hat \VV_{t-1}}}{\sqrt{V_{t-1}}} \leq (\sqrt{2} + 1) (\sqrt{\hat \VV_t} - \sqrt{\hat \VV_{t-1}})$.
Continuing from above and using a telescoping argument and $\hat \VV_c \ge \hat \VV_a$, we obtain
$$
\sum_{t = a}^b \eta_t ({\hat \VV_t - \hat \VV_{t-1}}) \leq E_a/2 + 4 \left(\sqrt{\hat \VV_b} - \sqrt{\hat \VV_a}\right) \sqrt{\log n}.
$$
Taking the sum over all $r$ regimes and another telescoping arguments yields $\sum_{i \in [r]} E_{a_i}/2 + 4 \sqrt{\hat \VV_T \log n}$.

All that remains to bound the sum of potentials is to observe that $\sum_{i \in [r]} E_{a_i} = \sum_{t \in \TT}$ and that $\sum_{i \in [r]} E_{a_i} \leq \sum_{i = -\infty}^{\ceil{\log_2 E}} 2^i \leq 4 E$.
Putting it all together, we obtain
$$
\mathrm{Regret}((\hat \ell_t, p_t)_{t \in [T]}) \leq  6E + 4 E \log n + 6 \sqrt{ \hat \VV_T \log n}.
$$
The final step is to bound $\mathrm{Regret}(( \ell_t, p_t)_{t \in [T]})$ on the entire loss vector sequence $\ell_t$ rather than the partially observed losses $\hat \ell_t$. To do so, note that conditioned on actions in rounds $1, \ldots, t-1$, $\hat \ell_t$ is an unbiased estimator for $\ell_t$ by construction. The theorem then follows using linearity of expectation to obtain $\mathrm{Regret}((\ell_t, p_t)_{t \in [T]})  = \E[\mathrm{Regret}((\hat \ell_t, p_t)_{t \in [T]})]$, the above bound, and Jensen's inequality.
\end{proof}

\begin{figure*}[htb!]
  \centering
  \begin{minipage}[t]{0.33\textwidth}
  \centering
 \includegraphics[width=1\textwidth]{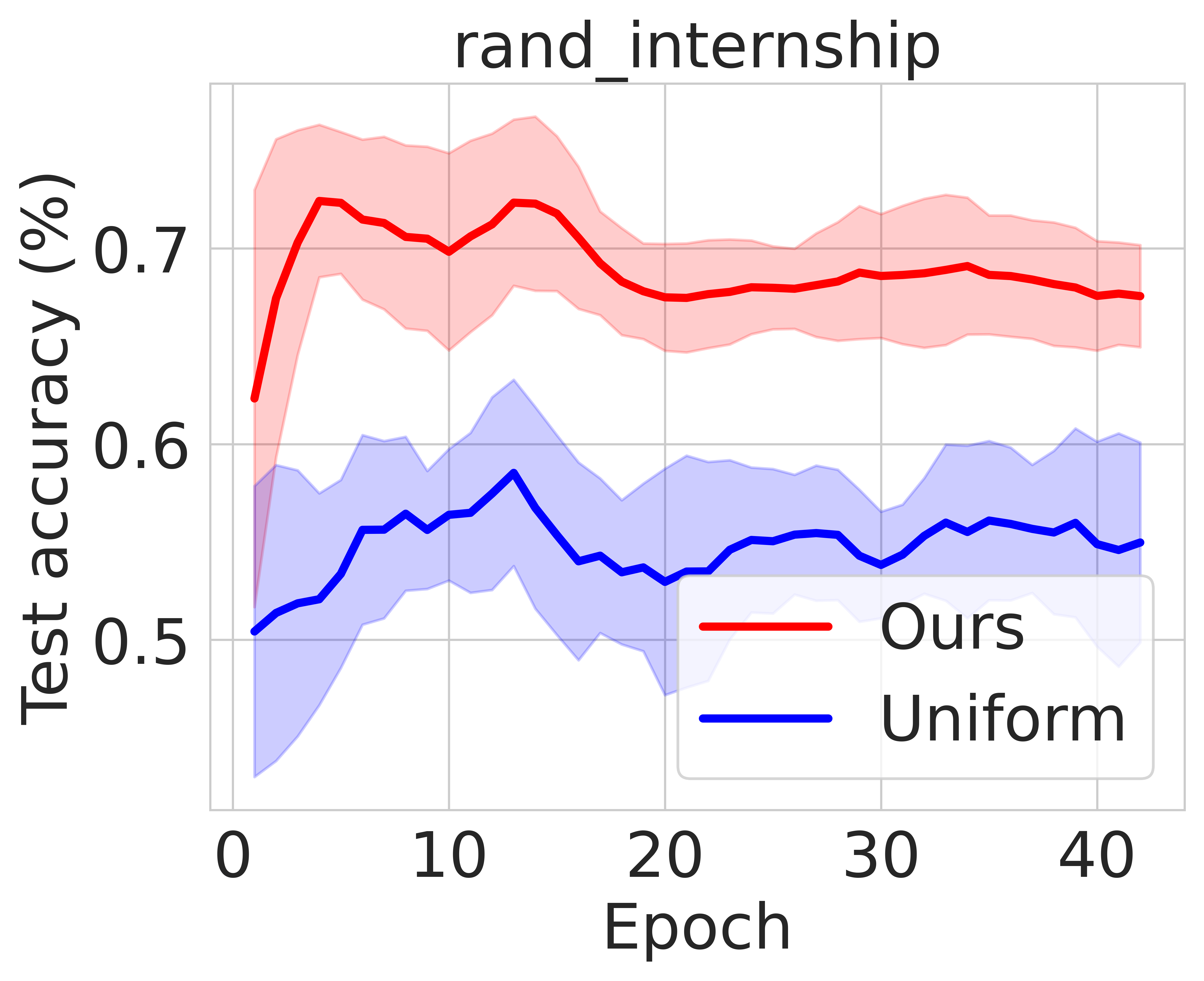}
  \end{minipage}%
  \hfill
  \begin{minipage}[t]{0.33\textwidth}
  \centering
 \includegraphics[width=1\textwidth]{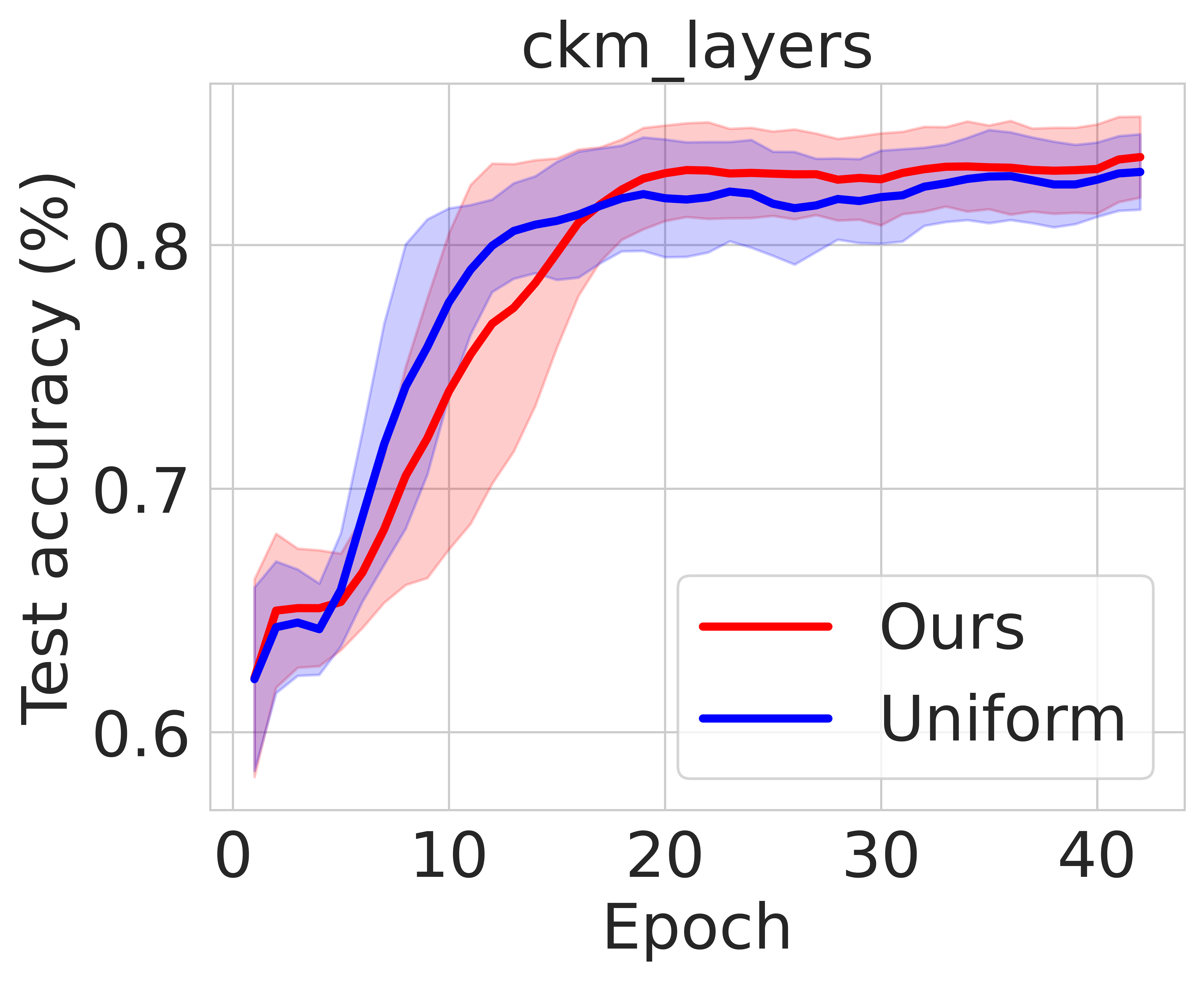}
  \end{minipage}%
  \hfill
    \begin{minipage}[t]{0.33\textwidth}
  \centering
 \includegraphics[width=1\textwidth]{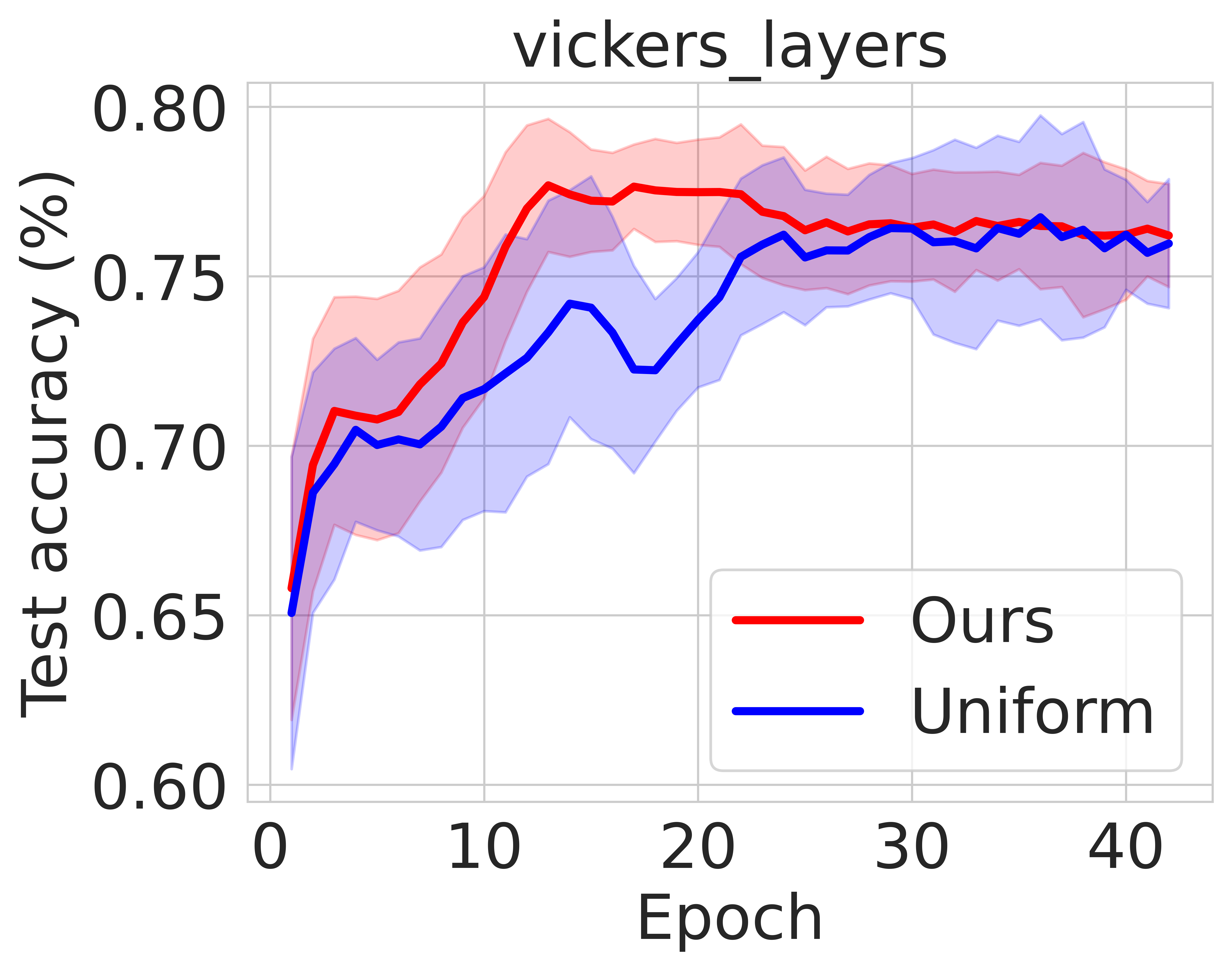}
  \end{minipage}
  
    \begin{minipage}[b]{0.33\textwidth}
  \centering
 \includegraphics[width=1\textwidth]{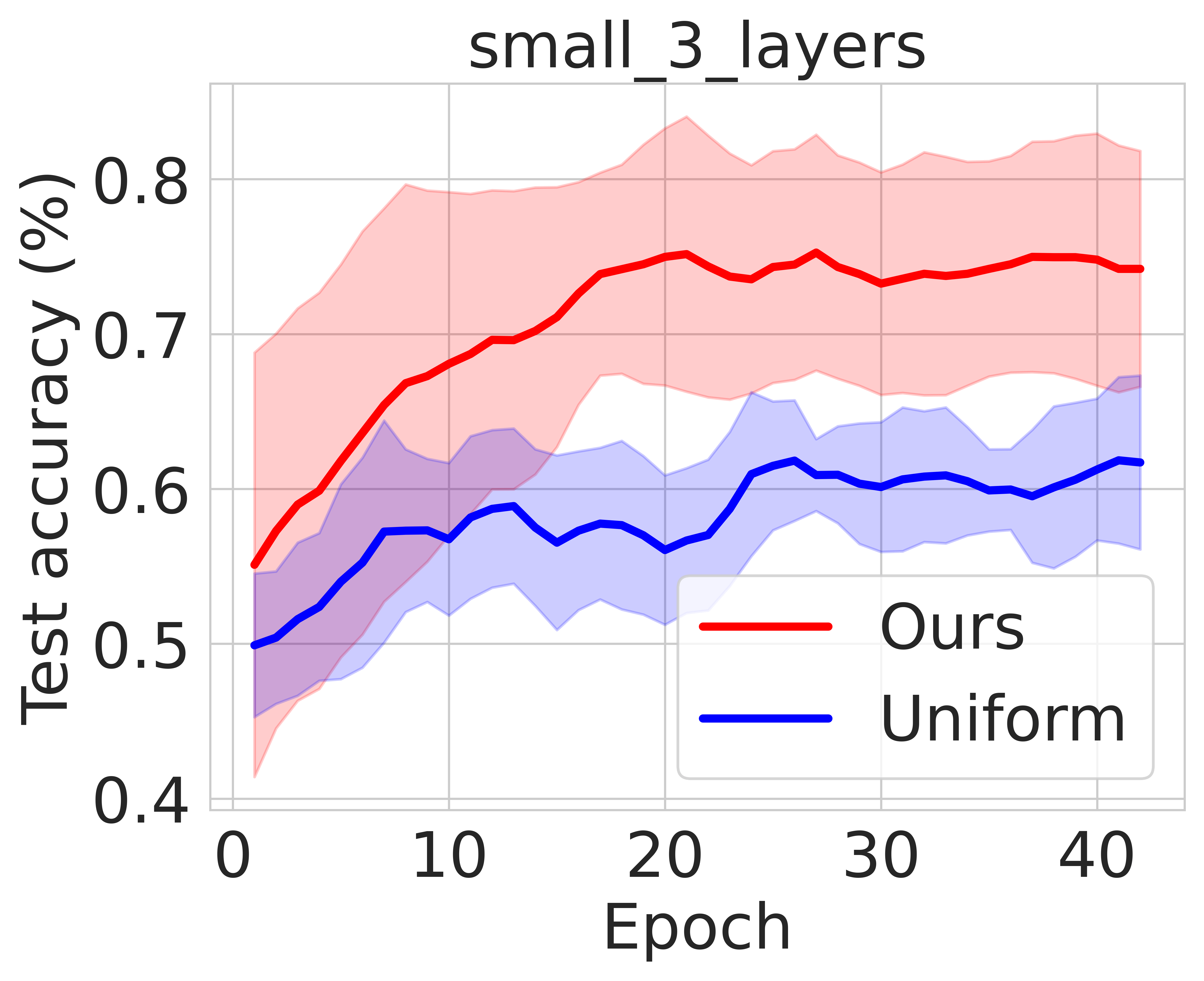}
  \end{minipage}%
  \hfill
  \begin{minipage}[b]{0.33\textwidth}
  \centering
 \includegraphics[width=1\textwidth]{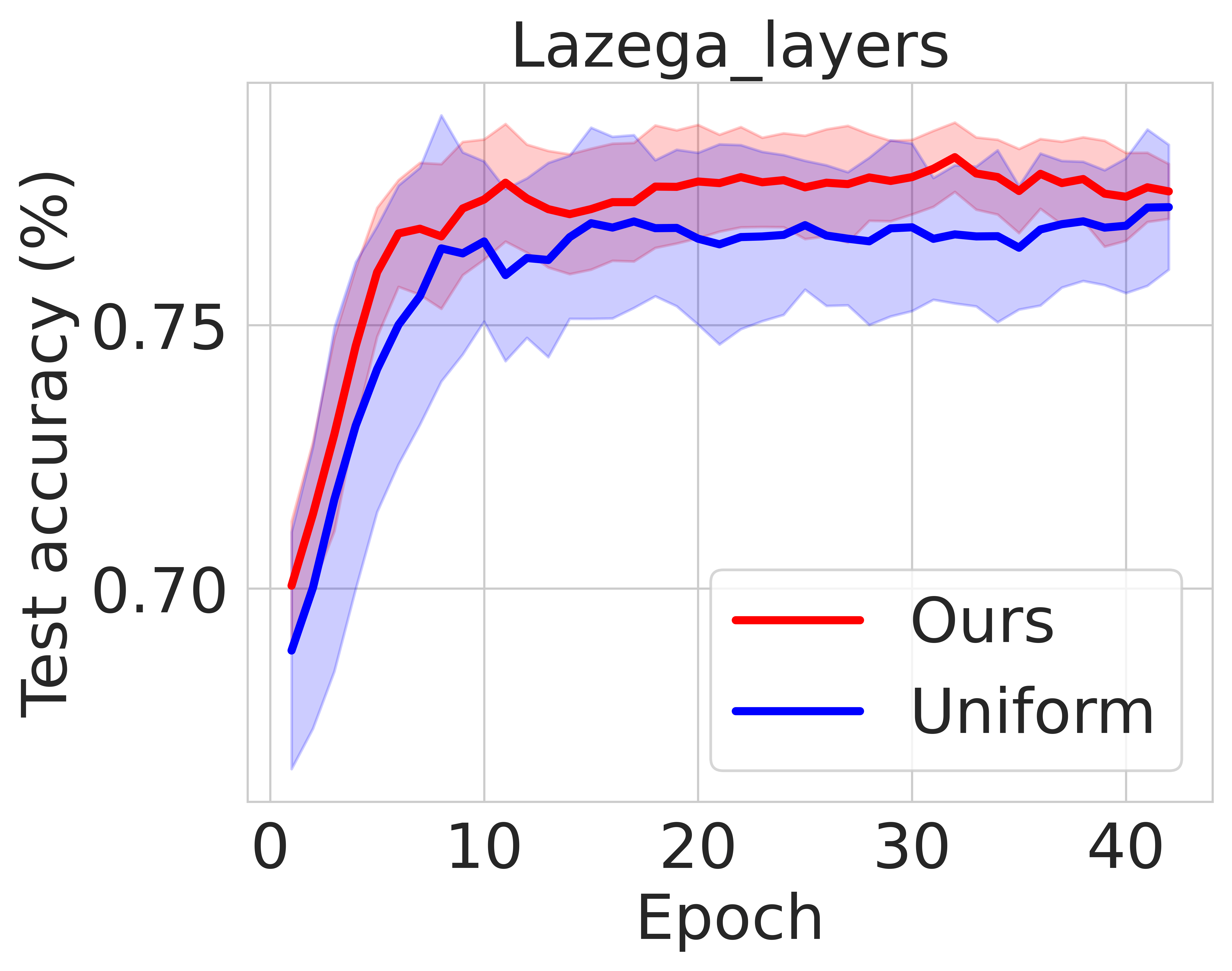}
  \end{minipage}%
  \hfill
      \begin{minipage}[b]{0.33\textwidth}
  \centering
 \includegraphics[width=1\textwidth]{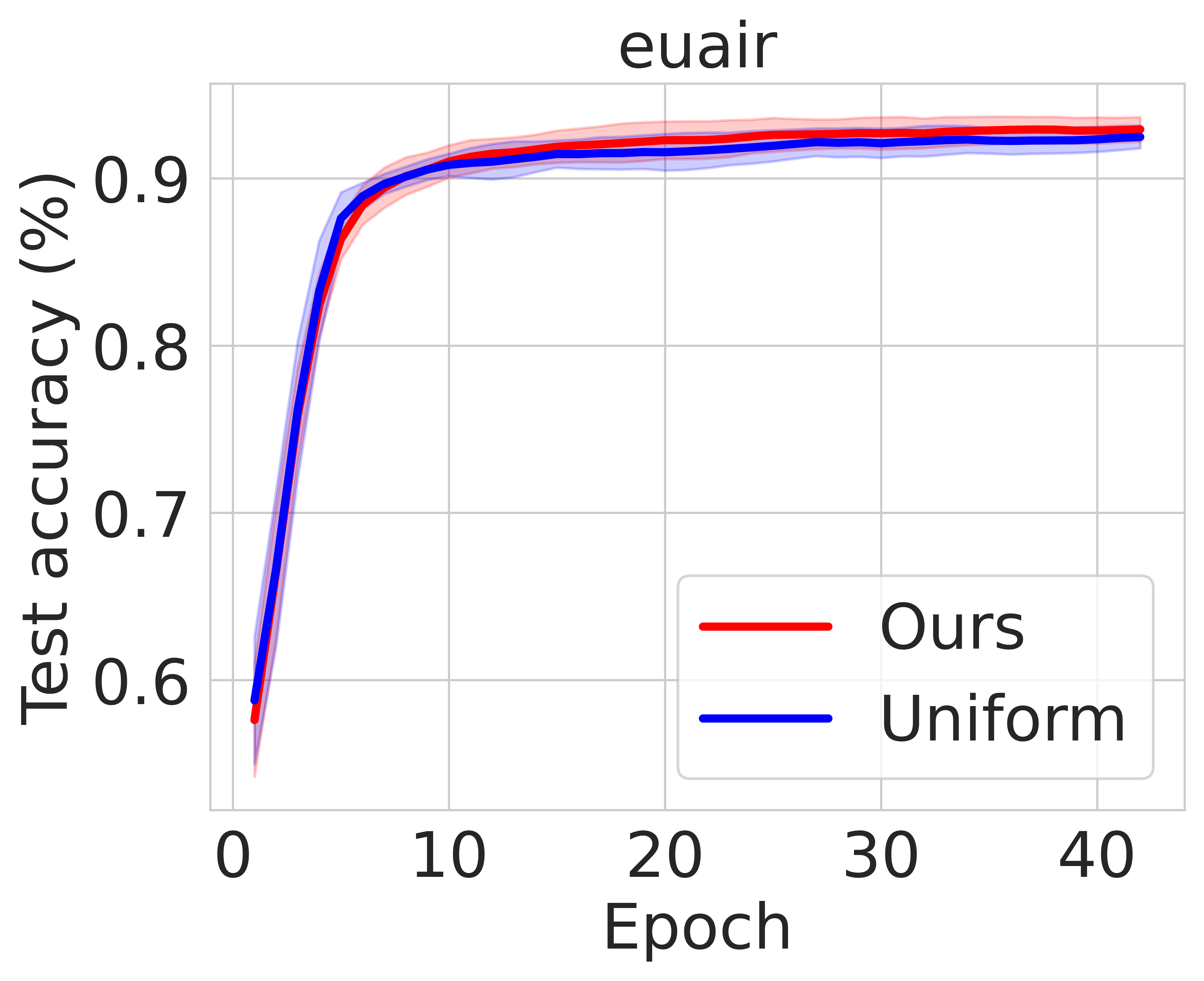}
  \end{minipage}%
  \caption{Test accuracy throughout the training process when sampling neighboring layers ($k$=1) with respect to our approach (red) and uniform sampling (blue) using the loss in Eqn.~\ref{eqn:euclidean-dist}. Shaded regions correspond to values within one standard deviation of the mean.}
	\label{fig:test-acc}
\end{figure*}

\begin{table}
\centering 
\begin{tabular}{c|c|c|c} 
Network & Layers & Nodes & Edges  \\ [0.1ex] 
\hline\vspace{-.25cm}\\[0.1ex] 
rand internship& 6 & 51 & 1382 \\
CKM & 3 & 246 & 1551 \\
Vickers & 3 & 29 & 740 \\
small 3 layers & 3 & 18 & 541  \\
LAZEGA & 3 & 71 & 2223 \\
EU AIR & 37 & 450 & 3588 
\end{tabular}
\caption{Relevant data sets used for the evaluations.}
\label{table:datasets}
\end{table}

\section{Results}
\label{sec:results}

We evaluated our approach in the link prediction setting and compared its performance to that of uniform sampling neighboring layers on the following data sets. The data sets are described below and further information about the core data sets can be found in Table~\ref{table:datasets}. The results were averaged over 10 trials with 5-fold cross validation.


\begin{figure*}
  \centering
  \begin{minipage}[t]{0.33\textwidth}
  \centering
 \includegraphics[width=1\textwidth]{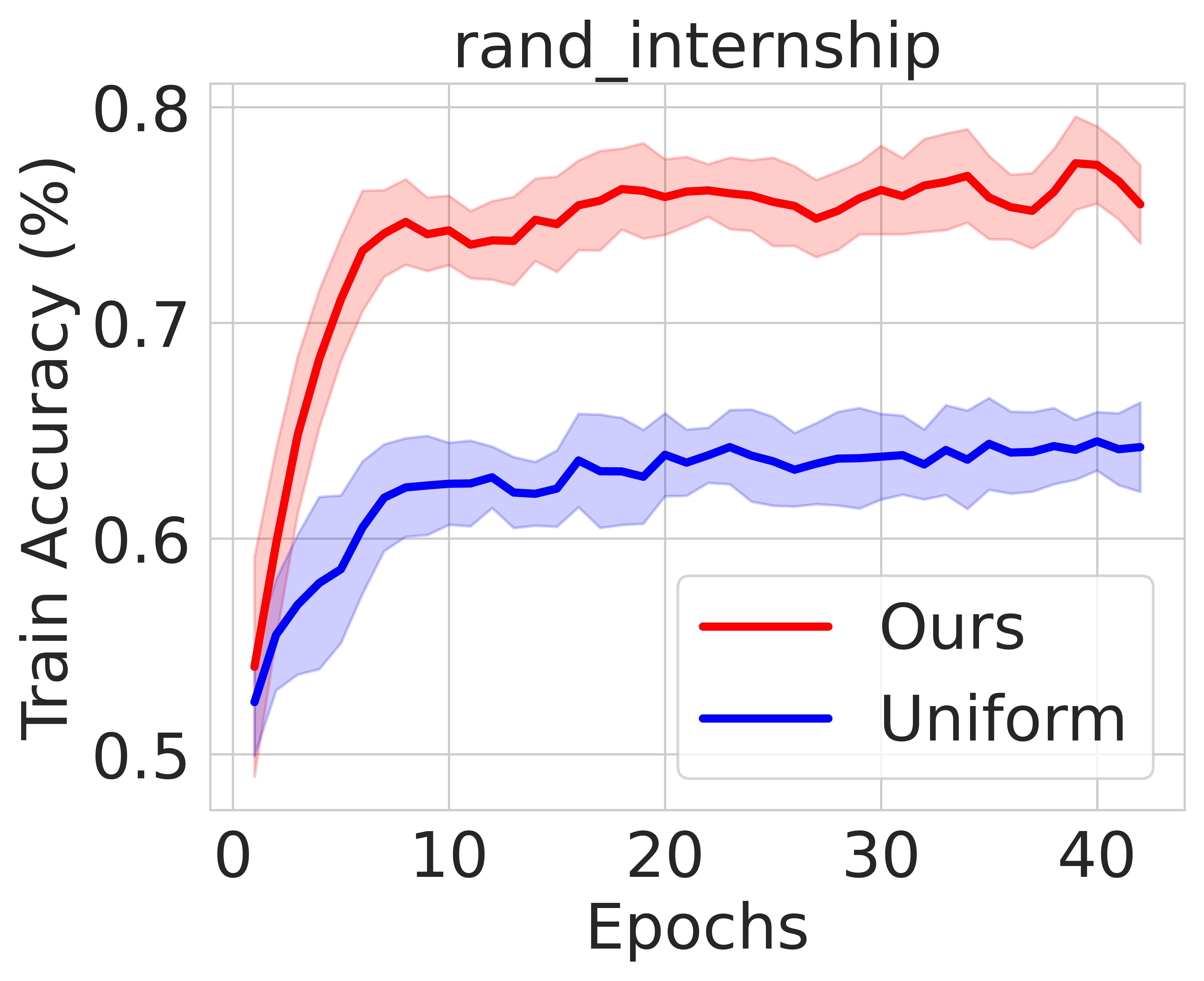}
  \end{minipage}%
  \hfill
  \begin{minipage}[t]{0.33\textwidth}
  \centering
 \includegraphics[width=1\textwidth]{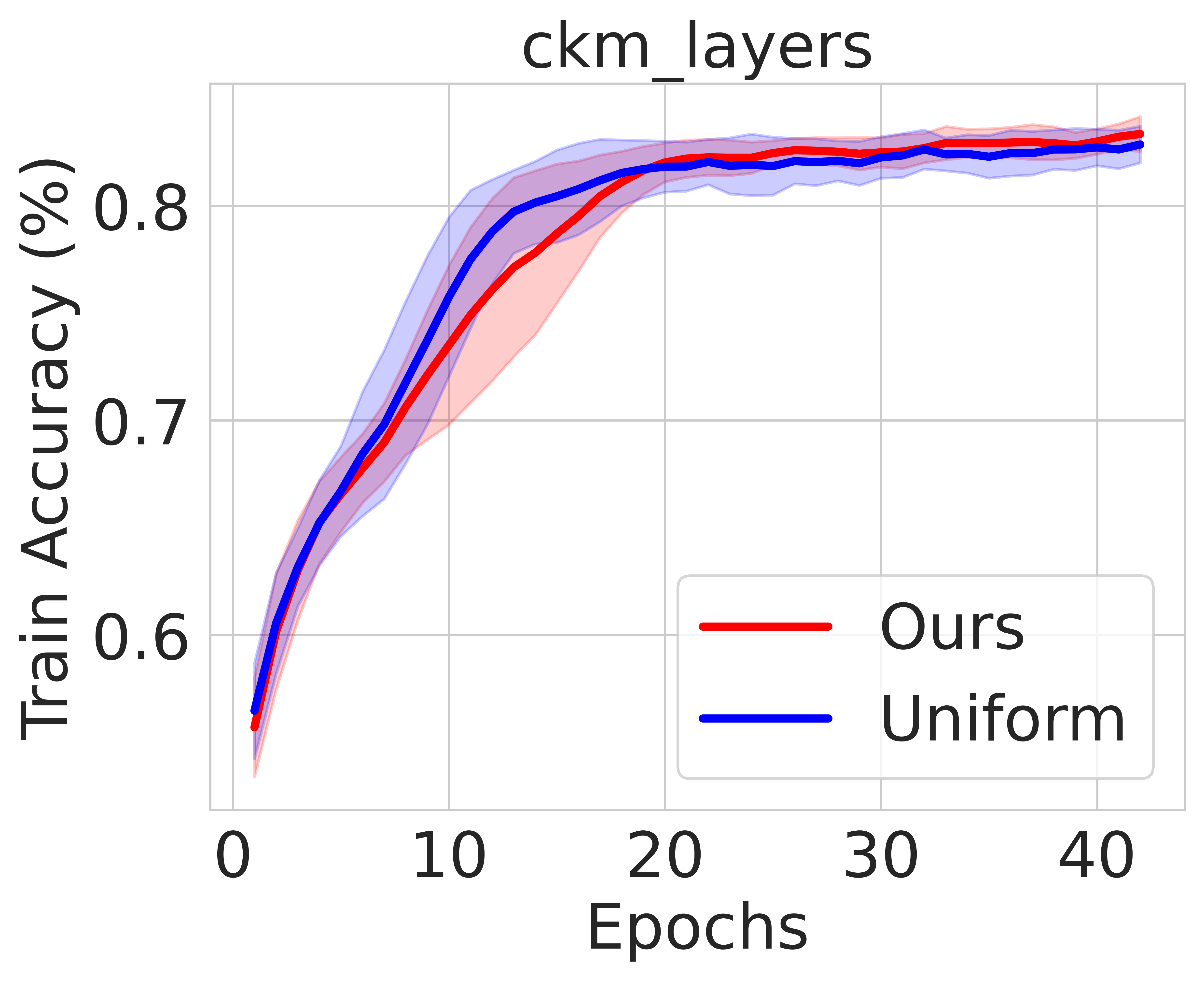}
  \end{minipage}%
  \hfill
    \begin{minipage}[t]{0.33\textwidth}
  \centering
 \includegraphics[width=1\textwidth]{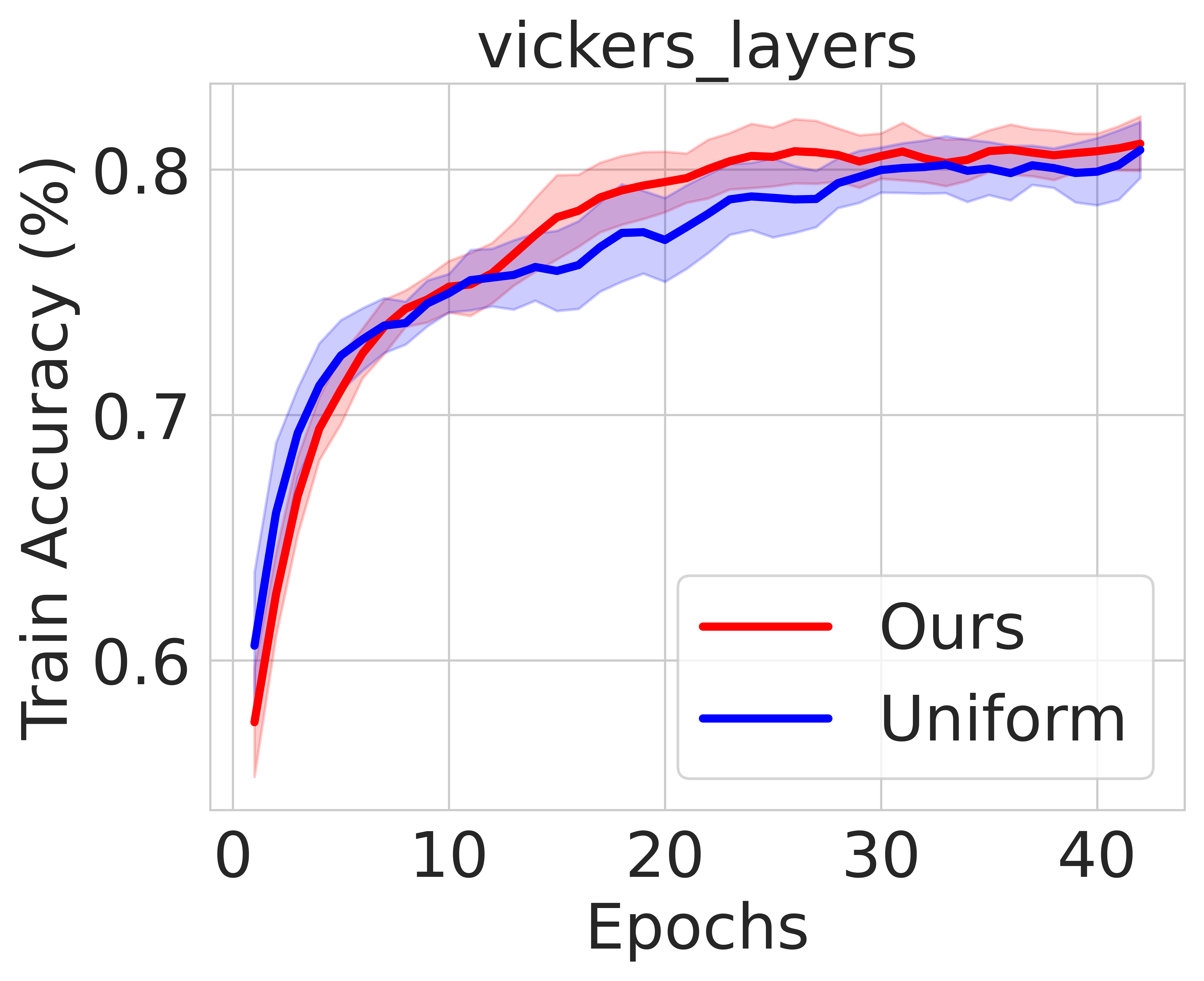}
  \end{minipage}
  
    \begin{minipage}[b]{0.33\textwidth}
  \centering
 \includegraphics[width=1\textwidth]{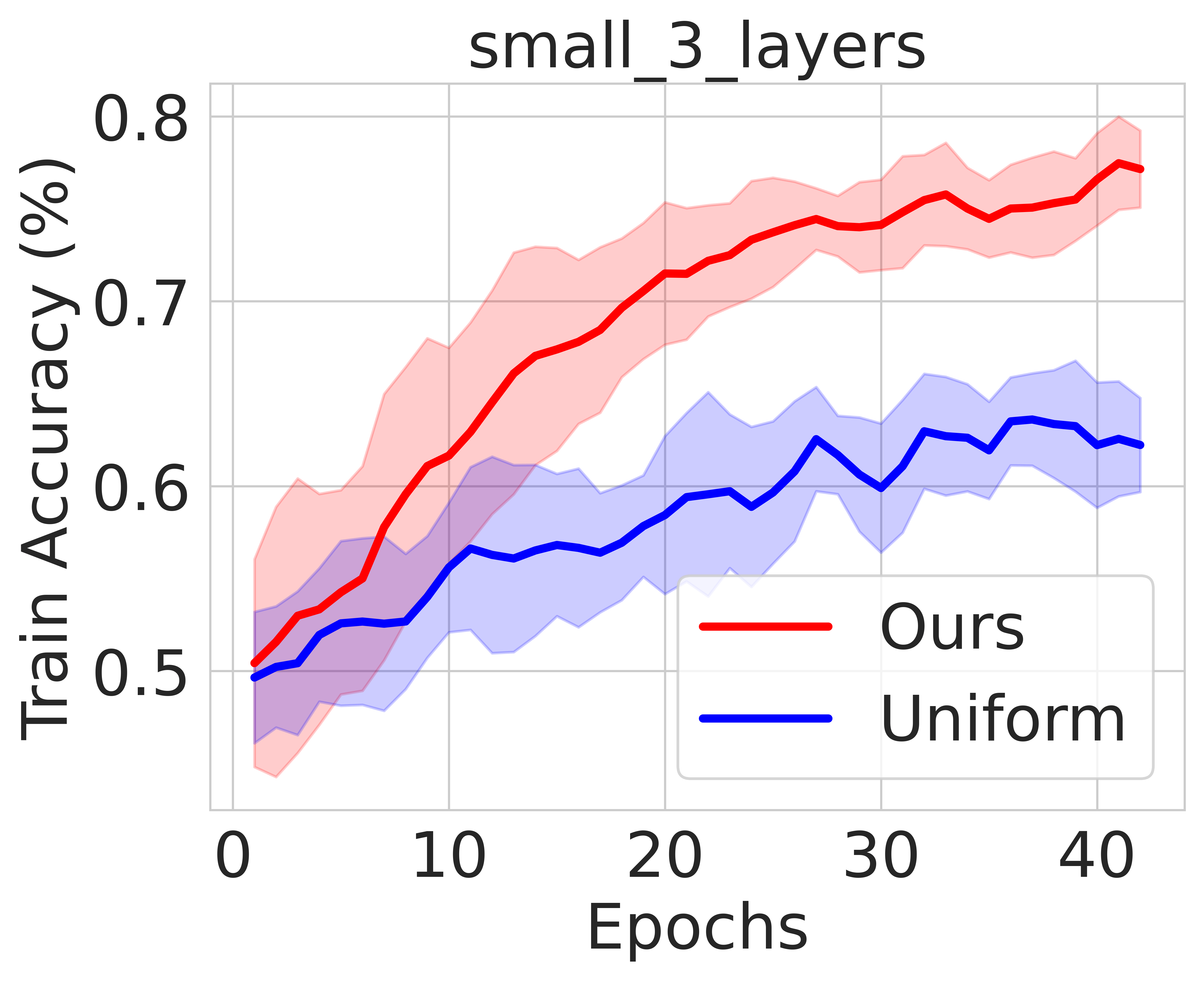}
  \end{minipage}%
  \hfill
  \begin{minipage}[b]{0.33\textwidth}
  \centering
 \includegraphics[width=1\textwidth]{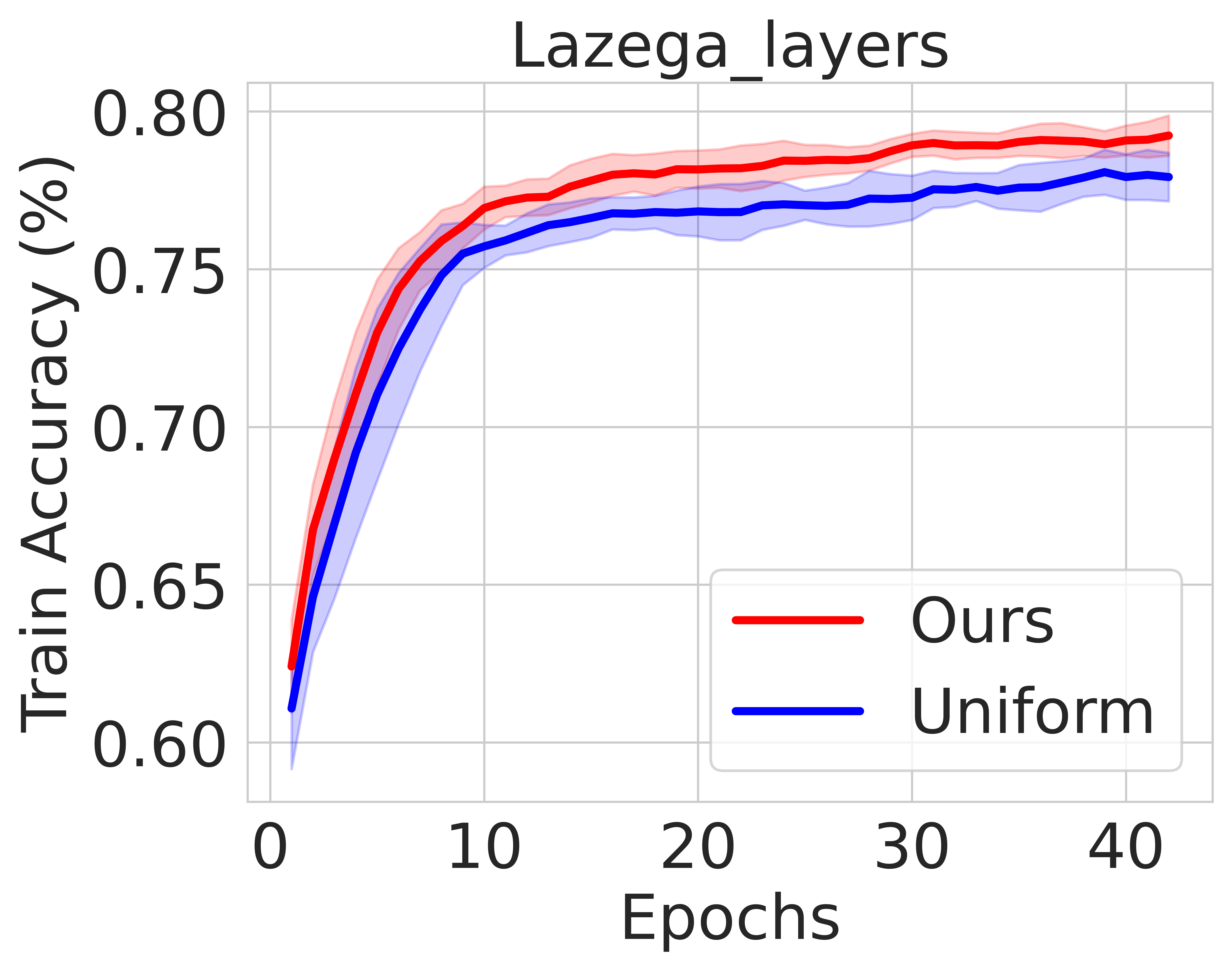}
  \end{minipage}%
  \hfill
      \begin{minipage}[b]{0.33\textwidth}
  \centering
 \includegraphics[width=1\textwidth]{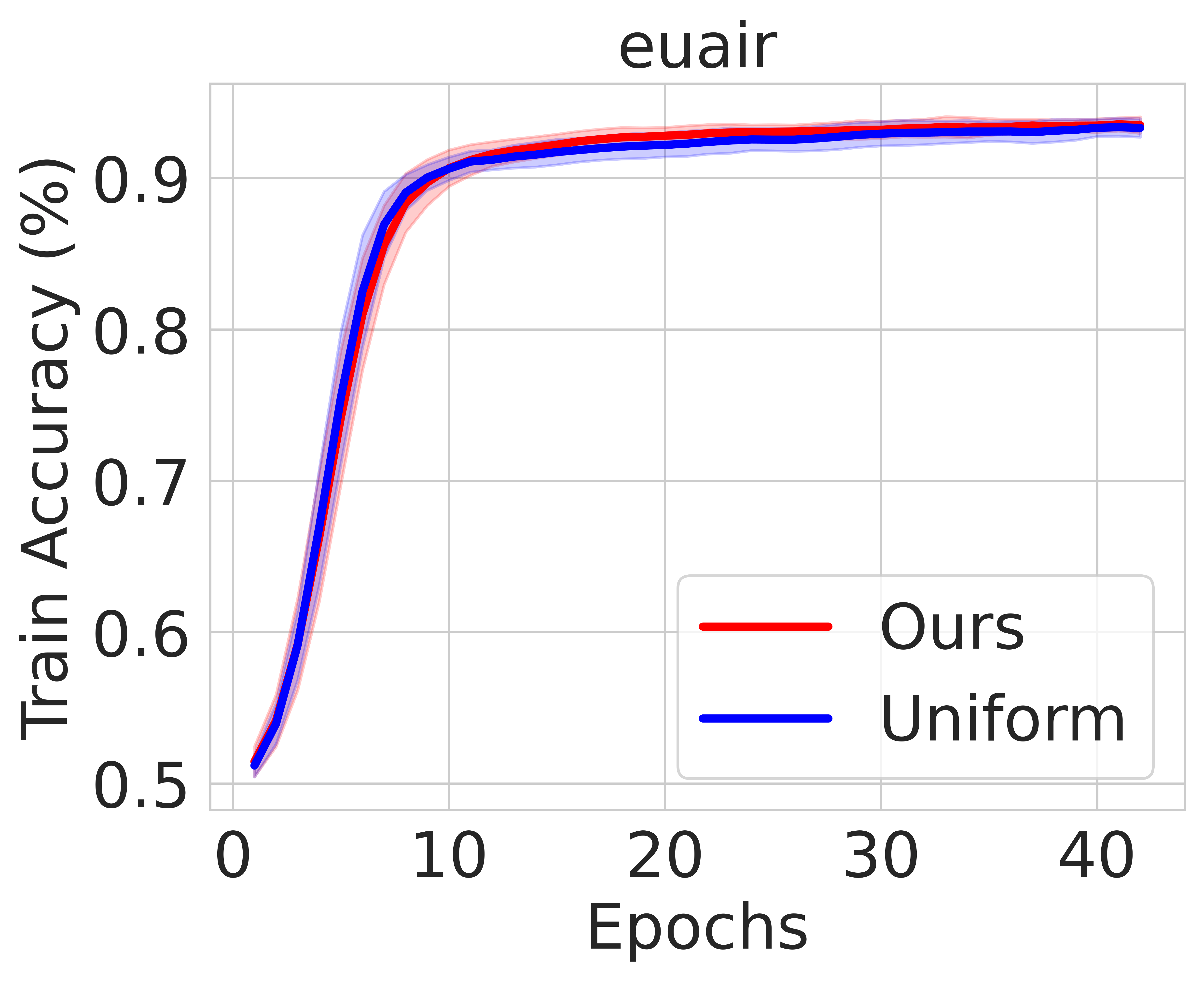}
  \end{minipage}%
  \caption{Train accuracy using the Euclidean distance metric for the loss (Eqn.~\ref{eqn:euclidean-dist}).}
	\label{fig:train-acc}
\end{figure*}

\begin{itemize}
    \item \textbf{rand internship:} a synthetic data set that comprised of 6 layers, where the first layer is taken from the CKM data set, and the remaining 5 layers are generated randomly so that they are of 90\%, 10\%, 10\%, 5\%, and 1\% similar to the first layer
    
    \item \textbf{CKM:} Physicians data collected from towns in Illinois
    
    \item \textbf{Vickers:} Vickers consists Data from seventh grade students who were asked 3 types of questions
    
    \item \textbf{small 3 layers:} a small network where the first layer is the first layer of the Vickers data set, and the second and third layers are of 90\% and 10\% similarity to the first
    
    \item \textbf{LAZEGA:} a Social network describing 3 relationships between law firm partners and associates
    
    \item \textbf{EUAIR:} contains direct flights between cities for 37 different European airlines
\end{itemize}

\subsection{Euclidean Distance Loss}

In this subsection, we evaluate the performance of our approach  define the similarity between the layers with respect to the Euclidean distance between their embeddings. That is, for sampling the neighbors of layer $i \in [L]$, the loss of a neighboring layer $j \in [n]$ at time step $t$ is given by
\begin{equation}
    \label{eqn:euclidean-dist}
\ell_{t, j} = \norm{v_{i}^{(t)} - v_{j}^{(t)}} \quad \forall{j \in [n]}.
\end{equation}

Figs.~\ref{fig:test-acc} and \ref{fig:train-acc} depict the results of our evaluations on the data sets enumerated above at each epoch during the training of the networks. In particular, in Fig.~\ref{fig:test-acc}, we observe the test accuracy of the multiplex network at each epoch during the training with uniform sampling (blue) and our method (red) of sampling the relevant layers. From Fig.~\ref{fig:test-acc} we note that our approach performs uniformly better in test accuracy relative to uniform sampling, and at times, by a very significant margin -- up to +15\% test accuracy (absolute terms). In particular, for the synthetic examples we constructed (rand-internship and small-3-layers) where there is a significant amount of similarity between certain layers, our algorithm is able to adaptively learn and sample the most relevant layers for training. Interestingly, for real-world data sets, a more modest, but significant improvement in test accuracy exists relative to uniform sampling. For example, we outperform uniform sampling by up to 8\% (absolute terms) test accuracy for the VICKERS data set in early stages of the training, and consistently outperform uniform sampling on the LAZEGA and EUAIR data sets across all training iterations.

In addition to the boosts in the test accuracy of the network, we also considered the training accuracy of our approach across the varying training iterations in Fig.~\ref{fig:train-acc}. Here, we see a similar trend as before: our approach uniformly outperforms uniform sampling on all data sets in context of the entire training regime. In fact, for the train accuracy, our approach leads to arguably more significant gains as evidence by the results on, e.g., small-3-layers, where the gap in performance is over 15\% in absolute accuracy.

\subsection{Cosine Similarity Loss}

In the previous subsection, we used the Euclidean distance between the embeddings quantify the loss of the neighboring layers. Next, we evaluate the performance of our approach using the cosine similarity metric where for each layer $i \in [L]$ the loss of each neighbor $j \in [n]$ is defined as 
\begin{equation}
    \label{eqn:cosine-sim}
    \ell_{t, j} = 1 - \nicefrac{\dotp{v_{i}^{(t)}}{v_{j}^{(t)}}}{\norm{v_{i}^{(t)}}\norm{v_{j}^{(t)}}}.
\end{equation}
Fig.~\ref{fig:test-acc-cosine-sim} depicts the results of our evaluations. Here, we can see a similar trend in performance as in the case of Euclidean loss, except the performance of the cosine similarity metric seems to be slightly better in some of the real-world data sets.

\begin{figure*}
  \centering
  \begin{minipage}[t]{0.33\textwidth}
  \centering
 \includegraphics[width=1\textwidth]{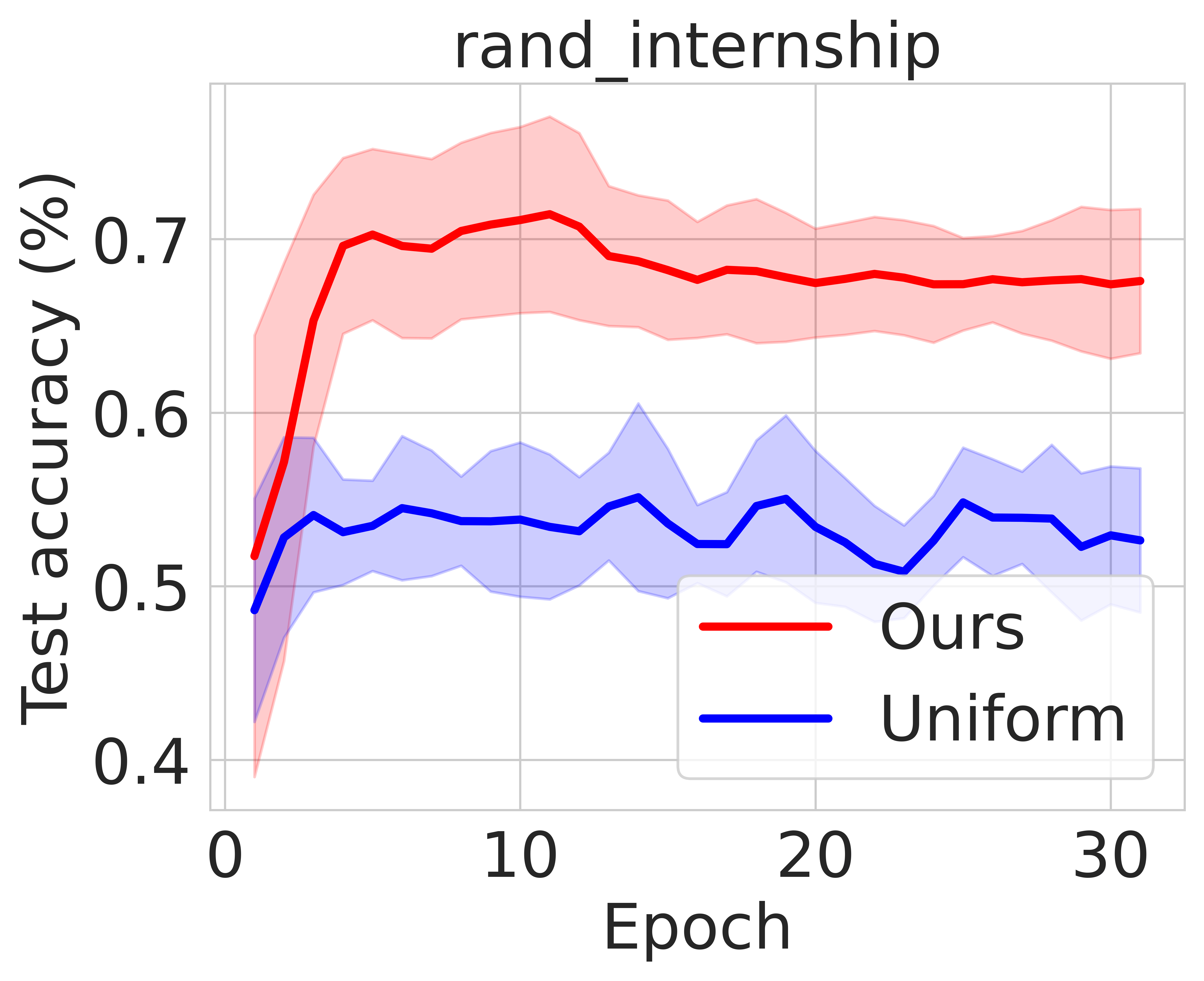}
  \end{minipage}%
  \hfill
  \begin{minipage}[t]{0.33\textwidth}
  \centering
 \includegraphics[width=1\textwidth]{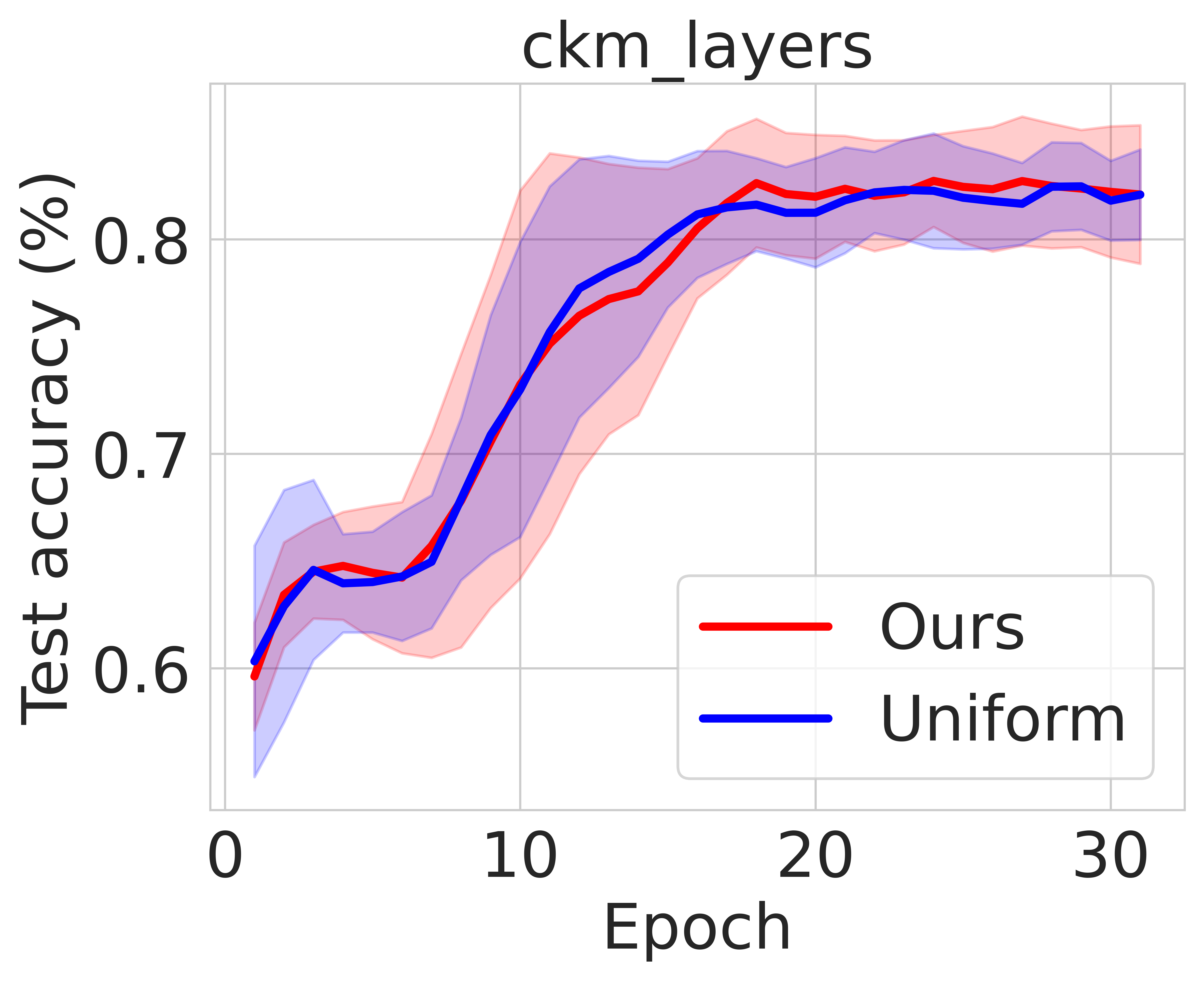}
  \end{minipage}%
  \hfill
    \begin{minipage}[t]{0.33\textwidth}
  \centering
 \includegraphics[width=1\textwidth]{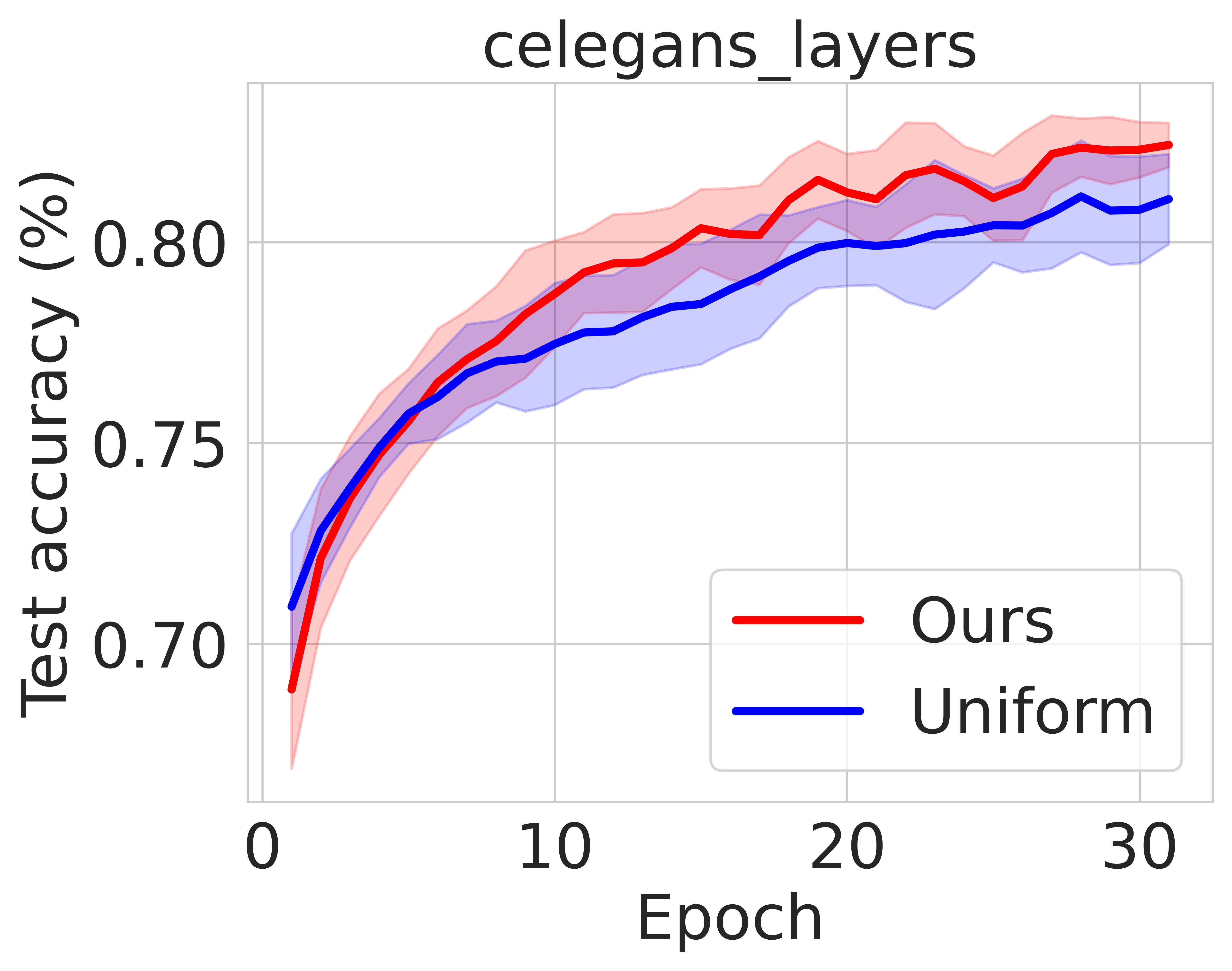}
  \end{minipage}
  
    \begin{minipage}[b]{0.33\textwidth}
  \centering
 \includegraphics[width=1\textwidth]{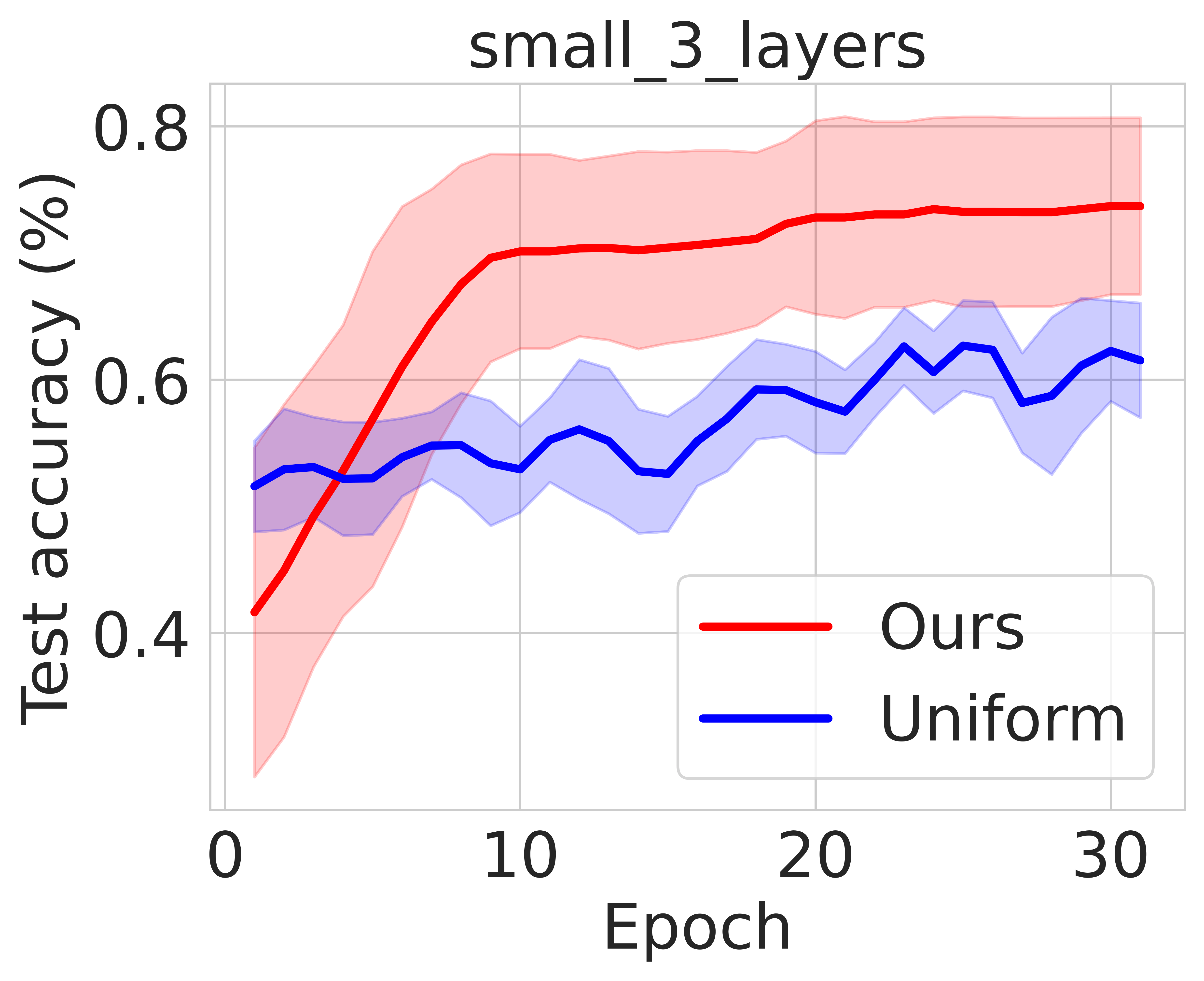}
  \end{minipage}%
  \hfill
  \begin{minipage}[b]{0.33\textwidth}
  \centering
 \includegraphics[width=1\textwidth]{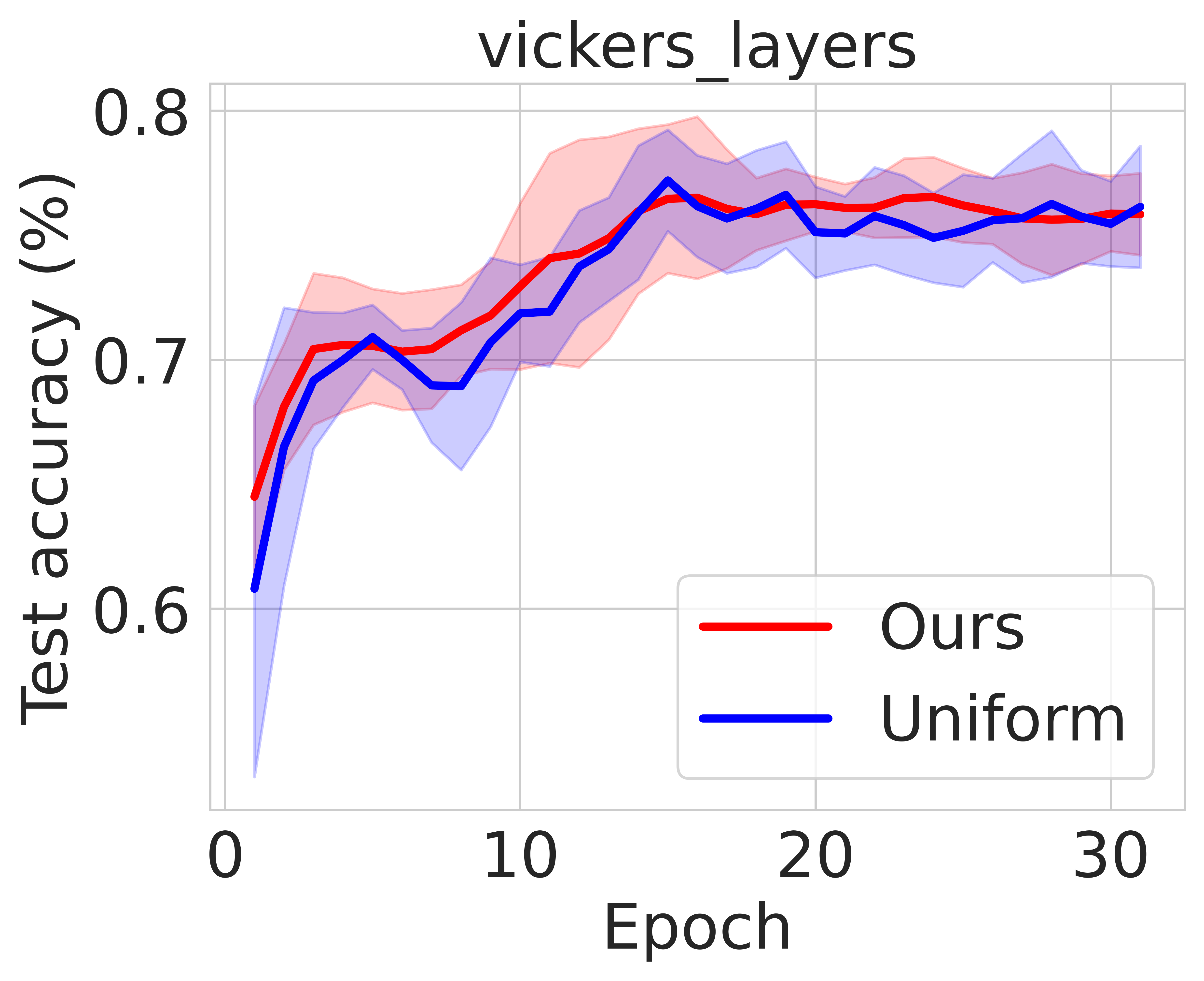}
  \end{minipage}%
  \hfill
      \begin{minipage}[b]{0.33\textwidth}
  \centering
\includegraphics[width=1\textwidth]{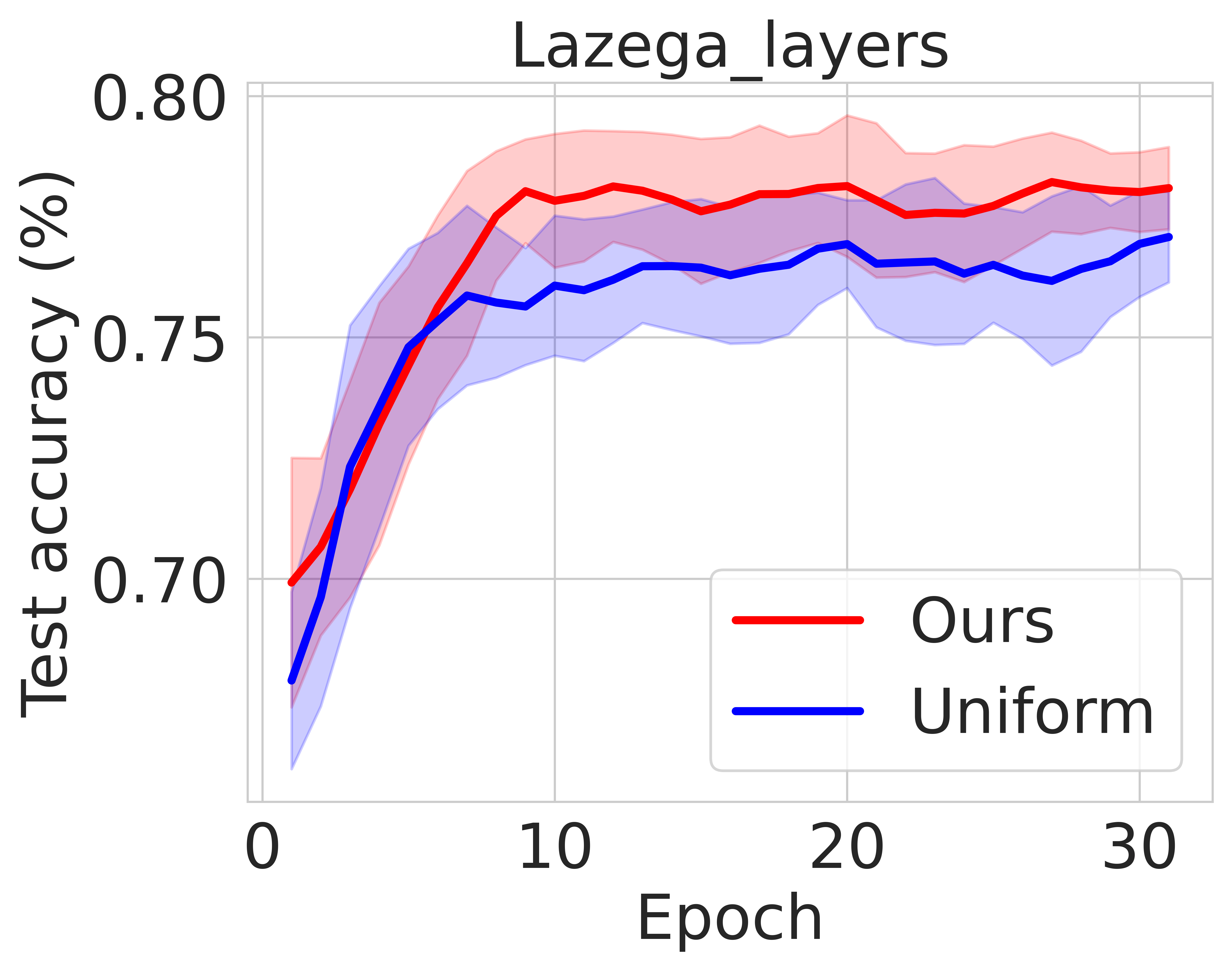}
  \end{minipage}%
  \caption{Test accuracy using the Cosine Similarity Metric (Eqn.~\ref{eqn:cosine-sim}).}
	\label{fig:test-acc-cosine-sim}
\end{figure*}

\section{Discussion}
\label{sec:discussion}

Overall, our results in Sec.~\ref{sec:discussion} suggest that our bandit sampling approach has the potential to lead to significant gains over uniform sampling on both synthetic and real-world data sets.

\subsection{Open Problems}

\begin{enumerate}
    \item Extending our algorithm and the corresponding regret analysis to sampling multiple layers (i.e., $k > 1$). Our ongoing work suggests that this can be done using the dependent randomized rounding  procedure to sample $k$ layers with probabilities $p_t$ such that $\sum_{i \in [n]} p_{ti} = k$ and the technique of~\cite{uchiya2010algorithms}.
    
    \item Balancing the trade-off between sampling layers and the amount of time spent training the sampled layer.
    
    \item Additional experimentation with varying losses and analysis into the class of appropriate loss functions. The losses considered in the paper account for proximity in the embedding space, but it would be ideal to specify a loss for classification or prediction performance.
\end{enumerate}

\section{Conclusion}
\label{sec:conclusion}

In this work, we proposed a novel layer-sampling method to accelerate the train and prediction performance of layer multiplex networks with many layers. Unlike prior work, our approach adaptively learns a sampling distribution over the neighbors of each layer so that only the layers with relevant information will be sampled and used in training and prediction. Unlike prior work in scalable multiplex learning, our approach has potential to alleviate the training and inference-time requirements of large multiplex networks with a large number of layers by focusing only on the relevant information at each time step. We presented empirical evaluations that support the practical effectiveness of our approach on a variety of synthetic and real-world data sets.

\paragraph{Acknowledgments} This research was supported by JPMorgan Chase \& Co.

\paragraph{Disclaimer} This paper was prepared for information purposes by the AI Research
Group of JPMorgan Chase \& Co and its affiliates (``J.P. Morgan''),
and is not a product of the Research Department of J.P. Morgan. J.P.
Morgan makes no explicit or implied representation and warranty and
accepts no liability, for the completeness, accuracy or reliability of
information, or the legal, compliance, financial, tax or accounting
effects of matters contained herein. This document is not intended
as investment research or investment advice, or a recommendation,
offer or solicitation for the purchase or sale of any security, financial
instrument, financial product or service, or to be used in any way for
evaluating the merits of participating in any transaction.

\afterpage{\FloatBarrier}

\bibliographystyle{named}
\bibliography{refs}

\end{document}